\def\acronym{OCD\xspace}
\newcommand{\mohammad}[1]{\textcolor{magenta}{(Mo: #1)}}
\newcommand{\rebut}[1]{\textcolor{black}{#1}}
\newcommand{\comment}[1]{}
\newcommand{\new}[1]{\textcolor{blue}{#1}}
\renewcommand{\citeauthoryear}[1]{(\citeauthor{#1}; \citeyear{#1})}
\newcommand{\mmin}[1]{\underset{#1}{\operatorname{min}}}
\newcommand{\mmax}[1]{\underset{#1}{\operatorname{max}}}
\newcommand{\argmin}[1]{\underset{#1}{\operatorname{argmin}}}
\newcommand{\one}[1]{\mathbbm{1}[#1]}
\def\libclean{4.5}
\def\libother{13.3}
\definecolor{Fcolor}{HTML}{af2418}
\definecolor{Ccolor}{HTML}{ffd359}
\renewcommand{\vec}[1]{\boldsymbol{\mathbf{#1}}}
\def\a{y}
\def\ba{\vec{y}}
\def\bas{\vec{y}^*}
\def\bai{\hat{\ba}}
\def\ed{\mathrm{D}_\text{edit}}
\def\pit{p_{\theta}}
\def\pitt{p_{\theta,t}}
\def\bs{\vec{s}}
\def\bu{\vec{u}}
\def\bv{\vec{v}}
\def\bx{\vec{x}}
\def\by{\vec{y}}
\def\byt{\tilde{\vec{y}}}
\def\yt{\tilde{y}}
\def\ra{{\textnormal{a}}}
\def\bys{{\mathbf y}^*}
\def\ys{{y}^*}
\def\byt{\widetilde{\mathbf y}}
\def\V{\mathcal{V}}
\def\Y{\mathcal{Y}}
\def\int{\mathrm{int}}
\DeclareMathOperator{\E}{\mathbb{E}}
\newcommand{\kl}[2]{\mathrm{KL}\left(#1~\Vert~#2\right)}
\def\eg{{\em e.g.}}
\def\ie{{\em i.e.}}
\def\vs{{\em v.s.}}
\def\aka{{\em a.k.a.}}
\newtheorem{theorem}{Theorem}
\newtheorem{lemma}[theorem]{Lemma}
\newcommand{\tabref}[1]{Table~\ref{#1}}
\newcommand{\figref}[1]{Figure~\ref{#1}}
\newcommand{\secref}[1]{Section~\ref{#1}}
\setlist{nolistsep}
\title{Optimal Completion Distillation\\
for Sequence Learning\vspace*{-.3cm}}
\author{
\begin{tabular}{c@{\hspace*{.5cm}}c@{\hspace*{.5cm}}c}
Sara Sabour & William Chan & Mohammad Norouzi\\
\end{tabular}\\[.1cm]
\texttt{\{sasabour,\:williamchan,\:mnorouzi\}@google.com}\\
Google Brain
}}
\author{%
Sara Sabour,~William Chan,~Mohammad Norouzi  \\
\texttt{\!\{sasabour,\,williamchan,\,mnorouzi\}@google.com} \\
Google Brain\\
}
\pgfplotsset{compat=1.14}
\begin{document}

\maketitle

\vspace*{-.2cm}
\begin{abstract}
\vspace*{-.2cm}
We present {\em Optimal Completion Distillation} (OCD), a training procedure for optimizing
sequence to sequence models based on edit distance. OCD is efficient, has no hyper-parameters
of its own, and does not require pretraining or joint optimization with conditional log-likelihood.
Given a partial sequence generated by the model, we first identify the set of optimal suffixes
that minimize the total edit distance, using an efficient dynamic programming algorithm.
 Then, for each position of the generated sequence, we define a target distribution that puts an equal
 probability on the first token of each optimal suffix. OCD achieves the state-of-the-art
 performance on end-to-end speech recognition, on both Wall Street Journal and Librispeech datasets,
 achieving $9.3\%$ and \rebut{$\libclean\%$} word error rates, respectively.

\comment{Then, at each position of the generated sequence, we encourage the model to continue with the set of optimal extensions.}
\comment{we develop a dynamic programming algorithm to exactly 
At each position in the sequence, we teach the model to mimic the optimal suffixes.
}

\comment{,
outperforming all previous techniques, including our own well tuned teacher forcing and scheduled sampling baselines.}

\comment{Neural sequence models have revolutionized multiple application domains.
These models have been predominantly trained via teacher forcing and conditional log-likelihood.}

\comment{

Neural sequence models have revolutionized multiple application domains.
These models have been predominantly trained via teacher forcing and conditional log-likelihood.
We present Optimal Completion Distillation (OCD), a training procedure that does not require initialization nor joint optimization with maximum likelihood.
Given any arbitrary prefix, (e.g. a partial sequence generated by sampling from the model), we develop an efficient dynamic programming algorithm to compute the optimal actions that minimizes the total edit distance.
At each position in the sequence, we distill the optimal completion policy to the model.
We achieve the state-of-the-art performance on the Wall Street Journal speech recognition task, achieving 3.1\% CER and 9.3\% WER,
outperforming all other previous work, including our own well tuned teacher forcing and scheduled sampling baselines.

Neural sequence models have revolutionized multiple application domains.
These models have been predominantly trained via teacher forcing and conditional log-likelihood.
We present Optimal Completion Distillation (OCD), a training procedure that does not require initialization nor joint optimization with conditional likelihood.
Given any arbitrary prefix, (e.g. a partial sequence generated by sampling from the model), we develop an efficient dynamic programming algorithm to compute the optimal tokens that minimizes the total edit distance.
At each position in the sequence, we distill the optimal completion policy into the model.
We achieve the state-of-the-art performance on the Wall Street Journal speech recognition task, achieving 3.1\% CER and 9.3\% WER,
outperforming all other previous work, including our own well tuned teacher forcing and scheduled sampling baselines.

}
\vspace*{-.2cm}
\end{abstract}

\vspace*{-.2cm}
\section{Introduction}
\vspace*{-.1cm}
\label{sec:intro}
Recent advances in natural language processing and speech recognition
hinge on the development of expressive neural network architectures for
sequence to sequence (seq2seq)
learning~\citep{sutskeveretal14,bahdanau2014neural}. Such
encoder-decoder architectures are adopted in both machine translation~\citep{bahdanau2014neural,wu2016google,parity2018microsoft} and speech recognition systems~\citep{chan-icassp-2016,bahdanau-icassp-2016,chiu2017state} achieving impressive performance above traditional multi-stage pipelines~\citep{koehn2007moses,povey2011kaldi}. 
Improving the building blocks of seq2seq models can fundamentally advance machine translation and speech
recognition, and positively impact other domains such as image captioning \citep{showattendtell2015},
parsing \citep{vinyals-nips-2015}, summarization \citep{rush-emnlp-2015}, and program
synthesis \citep{seq2sql}.

To improve the key components of seq2seq models, one can either design better architectures,
or develop better learning algorithms. Recent
architectures using convolution~\citep{gehring-arxiv-2017} and self
attention~\citep{transformer2017} have proved to be useful, especially
to facilitate efficient training. On the other hand, despite many attempts to
mitigate the limitations of Maximum Likelihood Estimation (MLE)
\citep{ranzato-iclr-2016,wiseman-emnlp-2016,norouzi-nips-2016,bahdanau-iclr-2017,searnn2017},
MLE is still considered the dominant approach for training seq2seq
models\comment{ \new{from scratch} \mohammad{it is also true more generally, no?}}. Current alternative approaches require pre-training or 
joint optimization with conditional log-likelihood. They are difficult to implement and
require careful tuning of new hyper-parameters (\eg~mixing ratios). In addition, alternative
approaches typically do not offer a substantial performance improvement over a well tuned MLE
baseline, especially when label smoothing~\citep{pereyra-iclr-2017,edunov2017classical} and
scheduled sampling~\citep{bengio-nips-2015} are used.


In this paper, we borrow ideas from search-based structured prediction~\citep{daumeetal09,ross-aistats-2011}
and policy distillation~\citep{rusu-iclr-2016} and develop an efficient algorithm for optimizing seq2seq models
based on edit distance\footnote{Edit distance between two sequences $\bu$ and $\bv$ is the minimum number of insertion,
deletion, and substitution edits required to convert $\bu$ to $\bv$ and {\em vice versa}.}. Our key observation is that
given an arbitrary prefix (\eg~a partial sequence generated by sampling from the model), we can {\em exactly} and {\em efficiently}
identify all of the suffixes that result in a minimum total edit distance (\vs~the ground truth target). Our training procedure,
called {\em Optimal Completion Distillation (OCD)}, is summarized as follows:
\begin{enumerate}[noitemsep,parsep=0pt,leftmargin=7mm]
\comment{    \item We always train on output sequences generated by sampling from the seq2seq model.}
    \item We always train on prefixes generated by sampling from the model that is being optimized.
    \item For each generated prefix, we identify all of the optimal suffixes that
    result in a minimum total edit distance \vs~the ground truth target using an efficient dynamic programming algorithm.
    \item We teach the model to {\em optimally extend} each generated prefix
    by maximizing the average log probability of the first token of each optimal suffix identified in step 2.
\end{enumerate}
\comment{In sum, during training, we distill the knowledge of the optimal completions into a seq2seq model.
During inference, we resort to the standard beam search process.} The proposed OCD algorithm is efficient,
straightforward to implement, and has no tunable hyper-parameters of its own.
\comment{Further, OCD provides a significant performance gain over a well tuned MLE baseline on two highly competitive
speech recognition benchmarks.}Our key contributions include:
\begin{itemize}[noitemsep,parsep=0pt,leftmargin=7mm]
    \item
    We propose \acronym, a stand-alone algorithm for optimizing seq2seq models based on
    edit distance. \acronym is scalable to real-world datasets with long sequences and large vocabularies, and consistently
    outperforms Maximum Likelihood Estimation (MLE) \rebut{by a large margin}. 
    \item Given a target sequence of length $m$ and a generated sequence of length $n$, we present an $O(nm)$ algorithm
    that identifies all of the optimal extensions for each prefix of the generated sequence.
    \item We demonstrate the effectiveness of \acronym on end-to-end speech
    recognition using attention-based seq2seq models. On the Wall Street Journal
    dataset, \acronym achieves a Character Error Rate (CER) of $3.1\%$ and a Word Error Rate (WER) of
    $9.3\%$ without language model rescoring, outperforming all prior work \rebut{(\tabref{tab:results})}. 
    On Librispeech, \acronym achieves state-of-the-art WER of \rebut{$\libclean\%$ on ``test-clean'' and $\libother\%$ on ``test-other'' sets (\tabref{tab:lib})}.
\end{itemize}

\comment{

These models are often trained to maximize the log-probability of the
correct output sequence using tractable autoregressive models. Such
models generate the output sequence one token at a time, typically in
a left-to-right fashion.

{\em Teacher forcing}~\citep{teacherforcing1989} for the optimization
of the{\em~conditional log-likelihood} objective is the standard
training procedure. A teacher provides a student with a{\em~prefix}
of the correct output sequence, $(y^*_1, \ldots, y^*_{i-1})$, and the
student optimizes the log-probability of the next correct token,
$y^*_{i}$. This resembles a teacher walking a student through a
sequence of perfect decisions, where the student learns as a passive
observer. However, during inference, the student needs to act
autonomously. To generate a token, $\hat{y}_i$, the student needs to
condition on their own previous outputs,
$(\hat{y}_1, \ldots, \hat{y}_{i-1})$, since the correct outputs are
not available anymore.

We highlight two problems with teacher forcing and Maximum Likelihood
Estimation (MLE):
\begin{enumerate}
    \item There is a mismatch between the prefixes seen by the model
    during training and inference. If the student's predictions
    deviate from the correct outputs, \ie~when the distribution of
    $(\hat{y}_1, \ldots, \hat{y}_{i-1})$ is different from the
    distribution of $(y^*_1, \ldots, y^*_{i-1})$, then the student may
    find themselves in a novel situation that they have not been
    trained for. The mismatch between the training and test
    distributions can result in poor generalization, especially when
    the training set is small or the model size is large.

    \item There is a mismatch between the training loss and the task
    objective. During training, one optimizes the log-probability of
    the correct output sequence, however this is often different from
    the task evaluation metric (\eg~edit distance or BLEU).
\end{enumerate}

There has been a recent surge of interest in understanding and
mitigating the limitations of teacher forcing and the log-likelihood
objective
(\eg~\cite{bengio-nips-2015,ranzato-iclr-2016,wiseman-emnlp-2016,norouzi-nips-2016,bahdanau-iclr-2017}). We
refer the reader to \secref{sec:relatedwork} for a detailed discussion
of prior work. We present a novel solution to both of the problems
discussed above based on reinforcement learning techniques, while
maintaining the efficiency and effectiveness of teacher forcing. Our
algorithm is easy to implement, and unlike most alternative
approaches, does not require initialization nor joint training with
MLE.

Our key intuition is that a competent teacher should not train a
student only on correct prefixes, but should also teach the student
about the best set of next tokens for an incorrect prefix. This is
especially helpful when the prefix includes some of the student's
mistakes. We generate a full sequence from the student, and at each
position, we train the student to predict the next set of tokens that
would result in an optimal completion. Our key technical contribution
is the development of a dynamic programming algorithm to find the
optimal completion for any prefix to minimize the total token error
rate. Our training procedure, {\em Optimal Completion Distillation}
(\acronym), distills the knowledge of the optimal completions into the
student.

We demonstrate the effectiveness of \acronym on end-to-end speech
recognition using attention-based sequence-to-sequence models.  We
adopt the challenging Wall Street Journal dataset and compare \acronym
against various previous techniques, including our well-tuned
implementation of teacher forcing and scheduled sampling. \acronym
achieves the state-of-the-art end-to-end speech recognition
performance with $3.1\%$ CER and $9.4\%$ WER, outperforming all prior
work
including \citep{bahdanau-icassp-2016,chorowski-interspeech-2017,chan-iclr-2017}.

}

\comment{
Neural sequence
models \citep{sutskeveretal14,bahdanau2014neural,gehring-arxiv-2017}
have seen remarkable success across many application domains including
machine translation \citep{wu2016google,parity2018microsoft},
speech recognition \citep{deepspeech2016,chan-icassp-2016}, and image
captioning \citep{showtell2015,showattendtell2015}.
These models are often trained to maximize the log-probability of the
correct output sequence using tractable autoregressive models. Such
models generate the output sequence one token at a time, typically in
a left-to-right fashion.

{\em Teacher forcing}~\citep{teacherforcing1989} for the optimization
of the {\em conditional log-likelihood} objective is the standard
training procedure. A teacher provides a student with a {\em prefix}
of the correct output sequence, $(y^*_1, \ldots, y^*_{i-1})$, and the
student optimizes the log-probability of the next correct token,
$y^*_{i}$. This resembles a teacher walking a student through a
sequence of perfect decisions, where the student learns as a passive
observer. However, during inference, the student needs to act
autonomously. To generate a token, $\hat{y}_i$, the student needs to
condition on their own previous outputs,
$(\hat{y}_1, \ldots, \hat{y}_{i-1})$, since the correct outputs are
not available anymore.

We highlight two problems with teacher forcing and Maximum Likelihood Estimation (MLE):
\begin{enumerate}
    \item There is a mismatch between the prefixes seen by the model during training and inference. If the student's predictions deviate from the correct outputs, \ie~when the distribution of $(\hat{y}_1, \ldots, \hat{y}_{i-1})$ is different from the distribution of $(y^*_1, \ldots, y^*_{i-1})$, then the student may find themselves in a novel situation that they have not been trained for. The mismatch between the training and test distributions can result in poor generalization, especially when the training set is small or the model size is large.
    \item There is a mismatch between the training loss and the task objective. During training, one optimizes the log-probability of the correct output sequence, however this is often different from the task evaluation metric (\eg~edit distance or BLEU).
\end{enumerate}

There has been a recent surge of interest in understanding and mitigating the limitations of teacher forcing and the log-likelihood objective \citep{bengio-nips-2015,ranzato-iclr-2016,wiseman-emnlp-2016,norouzi-nips-2016,bahdanau-iclr-2017}. We refer the reader to \secref{sec:relatedwork} for a detailed discussion of prior work. We present a novel solution to both of the problems discussed above based on reinforcement learning techniques, while maintaining the efficiency and effectiveness of teacher forcing. Our algorithm is easy to implement, and unlike most alternative approaches, does not require initialization nor joint training with MLE.

Our key intuition is that a competent teacher should not train a student only on correct prefixes, but should also teach the student about the best set of next tokens for an incorrect prefix. This is especially helpful when the prefix includes some of the student's mistakes. We generate a full sequence from the student, and at each position, we train the student to predict the next set of tokens that would result in an optimal completion. Our key technical contribution is the development of a dynamic programming algorithm to find the optimal completion for any prefix to minimize the total token error rate. Our training procedure, {\em Optimal Completion Distillation} (\acronym), distills the knowledge of the optimal completions into the student.

We demonstrate the effectiveness of \acronym on end-to-end speech recognition using attention-based sequence-to-sequence models.
We adopt the challenging Wall Street Journal dataset and compare \acronym against various previous techniques, including our well-tuned
implementation of teacher forcing and scheduled sampling. \acronym achieves the state-of-the-art end-to-end speech recognition performance with $3.1\%$ CER and $9.4\%$ WER, outperforming all prior work including \citep{bahdanau-icassp-2016,chorowski-interspeech-2017,chan-iclr-2017}.
}

\vspace*{-.2cm}
\section{Background: Sequence Learning with MLE}
\vspace*{-.1cm}
Given a dataset of input output pairs $\mathcal{D} \equiv \{(\bx, \bys)_i\}_{i=1}^N$,
we are interested in learning a mapping $\bx \to \by$ from an input $\bx$ to a target output sequence $\bys \in \mathcal{Y}$.
Let $\mathcal{Y}$ denote the set of all sequences of tokens from a finite vocabulary $\V$ with variable but finite lengths.
\comment{
$= (y^*_1, y^*_2, ...,y^*_T)$ of length $T$.
We assume that the tokens in $\bys$ are from a known finite vocabulary. Let $\by \in \mathcal{Y}$ denote a member of the set of all possible output sequences.}
Often learning a mapping $\bx \to \by$ is formulated as optimizing the parameters of a conditional distribution $\pit(\ba \mid \bx)$. Then, the final sequence prediction under the probabilistic model $\pit$ is performed by exact or approximate
inference (\eg~via beam search) as:
\begin{equation}
\bai ~\approx~ \mathrm{argmax\,}_{\ba \in \mathcal{Y}}\: \pit(\ba \mid \bx)~.
\label{eq:inference}
\end{equation}
Similar to the use of log loss for supervised classification, the standard approach to optimize the parameters $\theta$ of 
the conditional probabilistic model entails maximizing a conditional log-likelihood objective, $\mathcal{O}_{\text{MLE}}(\theta) = \E\nolimits_{(\bx, \bas) \sim p_\mathcal{D}} \log \pit(\bas \mid \bx)$.
This approach to learning the parameters is called Maximum Likelihood Estimation (MLE) \rebut{ and is commonly used in sequence to sequence learning.}

\cite{sutskeveretal14} propose the use of recurrent neural networks (RNNs)
for {\em autoregressive} seq2seq modeling to tractably optimize $\mathcal{O}_{\text{MLE}}(\theta)$.
An autoregressive model estimates the conditional probability of the target sequence given the source
one token at a time, often from left-to-right. A special {\em end-of-sequence} token is appended at
the end of all of target sequences to handle variable length.
The conditional probability of $\bys$ given $\bx$ is decomposed via the chain rule as,
\begin{equation}
\pit(\bas \mid \bx) ~\equiv~ \prod\nolimits_{t=1}^{\lvert \bas \rvert} \rebut{\pitt}(\a^*_t \mid \bas_{<t},\bx)~,
\label{eq:autoregressive}
\end{equation}
\comment{\coloneqq}
where $\bas_{<t} \equiv (\a^*_1, \ldots, \a^*_{t-1})$ denotes a prefix of
the sequence $\bas$. To estimate the probability of a token $a$ given a prefix $\bas_{<t}$ and an input $\bx$, denoted $\rebut{\pitt}(a \mid \bas_{<t},\bx)$, different architectures
have been proposed. 
Some papers~(\eg~\cite{britz2017massive}) have investigated the use of LSTM~\citep{hochreiter-nc-1997} and GRU~\citep{cho2014learning}
cells, while others proposed new architecturs based on soft attention~\citep{bahdanau2014neural}, convolution~\citep{gehring-arxiv-2017}, and self-attention~\citep{transformer2017}.
\rebut{Nonetheless, all of these techniques rely on MLE for learning,
\begin{equation}
\mathcal{O}_{\text{MLE}}(\theta) ~=~ \E\nolimits_{(\bx, \bas) \sim p_\mathcal{D}} \sum\nolimits_{t=1}^{\lvert \bas \rvert} \log \pitt(\a^*_t \mid \bas_{<t},\bx)~,
\label{eq:cll}
\end{equation}
where $p_\mathcal{D}$ denotes the empirical data
distribution, uniform across the dataset $\mathcal{D}$.
}
We present a new objective function for optimizing autoregressive seq2seq models applicable to any neural architecture.


\subsection{Limitations of MLE for Autoregressive Models}

In order to maximize the conditional log-likelihood \eqref{eq:cll} of an autoregressive seq2seq model \eqref{eq:autoregressive},
one provides the model with a {\em prefix} of $t-1$ tokens from the ground truth target sequence, denoted $\bys_{<t}$, and 
maximizes the log-probability of $\ys_t$ as the next token. This resembles a teacher walking a student through a
sequence of perfect decisions, where the student learns as a passive
observer. However, during inference one uses beam search \eqref{eq:inference}, wherein the student needs to
generate each token $\bai_t$ by conditioning on its own previous outputs,
\ie~$\bai_{<t}$ instead of $\bys_{<t}$. 
This creates a discrepancy between training and test known as {\em exposure bias} \citep{ranzato-iclr-2016}. Appendix~\ref{sec:exposure} expands this further.

Concretely, we highlight two limitations with the use of MLE for autoregressive seq2seq modeling:
\begin{enumerate}[noitemsep,parsep=0pt,leftmargin=7mm]
    \item There is a mismatch between the prefixes seen by the model during
    training and inference. When the distribution of
    $\bai_{<t}$ is different from the distribution of $\bys_{<t}$,
    then the student will find themselves in a novel situation that they have not been
    trained for. 
    This can result in poor generalization, especially when
    the training set is small or the model size is large.
    \item There is a mismatch between the training loss and the task
    evaluation metric. During training, one optimizes the log-probability of
    the ground truth output sequence, which is often different from
    the task evaluation metric (\eg~edit distance for speech recognition).
\end{enumerate}

There has been a recent surge of interest in understanding and
mitigating the limitations of MLE for autoregressive seq2seq modeling.
In \secref{sec:relatedwork} we discuss prior work in detail after presenting
our approach below.

\comment{We present a novel solution to both of the problems
discussed above based on reinforcement learning techniques, while
maintaining the efficiency and effectiveness of teacher forcing. Our
algorithm is easy to implement, it is stable and unlike most alternative
approaches, does not require pretraining nor joint training with
MLE for variance reduction.}

\comment{
\vspace*{-.2cm}
\subsection{Beyond Teacher Forcing and Conditional Log-Likelihood}
\vspace*{-.1cm}
The task loss is typically captured via a similarity function that compares $\bai$ and $\bas$,
denoted $R(\bas, \bai)$. We call this similarity function a reward function to highlight the connection between sequence prediction and
reinforcement learning. Common reward functions for sequence prediction include {\em negative edit distance} and {\em BLEU} score.
Once the model is trained, the quality of the model is measured in terms of its empirical performance on a held out dataset $\mathcal{D'}$ as,
\begin{equation}
  \mathcal{O}_{\text{ER}}(\theta) = \sum_{(\bx, \bas) \in \mathcal{D'}} R(\bas,\bai \approx \mathrm{argmax}_{\ba}\:\pit(\ba \mid \bx))~.
\label{eq:er}
\end{equation}
A key limitation of teacher forcing for sequence learning stems from the discrepancy between the training
and test objectives. One trains the model using conditional log-likelihood $\mathcal{O}_{\text{CLL}}$, but evaluates the quality of the model using
empirical reward $\mathcal{O}_{\text{ER}}$.

\begin{figure}[t]
\vspace*{-.2cm}
\begin{center}
\begin{tabular}{c@{\hspace*{2cm}}c}
      (a) {Teacher Forcing} &
      (b) {Scheduled Sampling} \\[-.1cm]
      \includegraphics[width=0.35\textwidth]{rnn-tf} &
      \includegraphics[width=0.35\textwidth]{rnn-ss} \\[-.1cm]
      \includegraphics[width=0.35\textwidth]{rnn-policy} &
      \includegraphics[width=0.35\textwidth]{rnn-ocd} \\[-.1cm]
      (c) {Policy Gradient} &
      (d) {Optimal Completion Distillation}\\
\vspace*{-.2cm}
\end{tabular}
\end{center}
\comment{
                \end{tabular}
                \end{figure}
                \begin{figure}[t]
                  \centering
                  \begin{subfigure}[t]{0.5\textwidth}
                      \centering
                      \includegraphics[width=0.75\textwidth]{rnn-tf}
                      \caption{Teacher Forcing}
                      \label{subfig:teacher-forcing}
                  \end{subfigure}%
                  \begin{subfigure}[t]{0.5\textwidth}
                      \centering
                      \includegraphics[width=0.75\textwidth]{rnn-ss}
                      \caption{Scheduled Sampling}
                      \label{subfig:ss}
                  \end{subfigure}
                  \begin{subfigure}[t]{0.5\textwidth}
                      \centering
                      \includegraphics[width=0.75\textwidth]{rnn-policy}
                      \caption{Policy Gradient}
                      \label{subfig:policy}
                  \end{subfigure}%
                  \begin{subfigure}[t]{0.5\textwidth}
                      \centering
                      \includegraphics[width=0.75\textwidth]{rnn-ocd}
                      \caption{Optimal Completion Distillation}
                      \label{subfig:ocd}
                  \end{subfigure}
                  \subref{subfig:teacher-forcing}
}
\caption{Illustration of different training strategies for autoregressive sequence models.
(a) Teacher Forcing: the model conditions on correct prefixes and is taught to predict the next ground truth token.
(b) Scheduled Sampling: the model conditions on tokens either from ground truth or drawn from the model and is taught to predict the next ground truth token regardless.
(c) Policy Gradient: the model conditions on prefixes drawn from the model and is encouraged to reinforce sequences with a large sequence reward $R(\tilde y)$.
(d) Optimal Completion Distillation: the model conditions on prefixes drawn from the model and is taught to predict an optimal completion policy $\pi^*$ specific to the prefix.}
\label{fig:trainingstrats}
\end{figure}

Unlike teacher forcing and Scheduled Sampling (SS), policy gradient approaches (\eg~\cite{ranzato-iclr-2016,bahdanau-iclr-2017})
and OCD aim to optimize the empirical reward objective \eqref{eq:er} on the training set. We illustrate four different training strategies
of MLE, SS, Policy Gradient and OCD in \figref{fig:trainingstrats}. The drawback of policy gradient techniques is twofold: 1) they cannot
easily incorporate ground truth sequence information except through the reward function, and 2) they have difficulty reducing the variance of the gradients to
perform proper credit assignment. Accordingly, most policy gradient approaches~\cite{ranzato-iclr-2016,bahdanau-iclr-2017,wu2016google} pre-train the model
using teacher forcing. By contrast, the OCD method proposed in this paper defines an optimal completion policy $\pi^*_t$
for any {\em off-policy} prefix by incorporating the ground truth information. Then, OCD optimizes a token level log-loss
and alleviates the credit assignment problem. Finally, training is much more stable, and we do not require initialization nor joint optimization with MLE.

There is an intuitive notion of {\em exposure bias}~\cite{ranzato-iclr-2016} discussed in the literature as a limitation of teacher forcing.
We formalize this notion as follows. One can think of the optimization of the log loss~\eqref{eq:cll} in an autoregressive models as a classification problem, where the input to the classifier is a tuple $(\bs, \bys_{<t})$ and the correct output is $\a^*_i$, where $\bas_{<t} \equiv (y^*_1, \ldots, y^*_{t-1})$. Then the training dataset comprises different examples and different prefixes of the ground truth sequence. The key challenge is that once the model is trained, one should not expect the model to generalize to a new prefix $\by_{<t}$ that does not come from the training distribution of $P(\bys_{<t})$. This problem can
become severe as $\by_{<t}$ becomes more dissimilar to correct prefixes. During inference, when one conducts beam search with a large beam size then
one is more likely to discover wrong generalization of $\pit(\hat{y}_{t} | \bai_{<t}, \bx)$, because the sequence is optimized globally.
A natural strategy to remedy this issue is to train on arbitrary prefixes. Unlike the aforementioned techniques OCD can train on any prefix
given its {\em off-policy} nature.

\comment{
The input trajectory that the model trains on introduces exposure bias to the model as it constrains the distribution of visited state and action pairs, e.g. generated prefixes. In teacher forcing the effect of exposure bias reveals by the significant performance decreases one observes when increasing the beam size in the beam search algorithm for decoding. Since OCD optimization is off-policy and is conditioned on the input prefix as the temporal dependency, we can train the model on a flexible range of trajectories. Rather than the typical $\epsilon$-greedy exploration in Q-learning we adopt a parameter-free online sampling regime where we always sample from the model distribution, $p_\theta(y_t | \ba_{<t})$, at each step of decoding during training. 
}
}

\vspace*{-.2cm}
\section{Optimal Completion Distillation}
\vspace*{-.1cm}
\comment{
\begin{figure}[t]
  \centering
    \includegraphics[width=.5\textwidth]{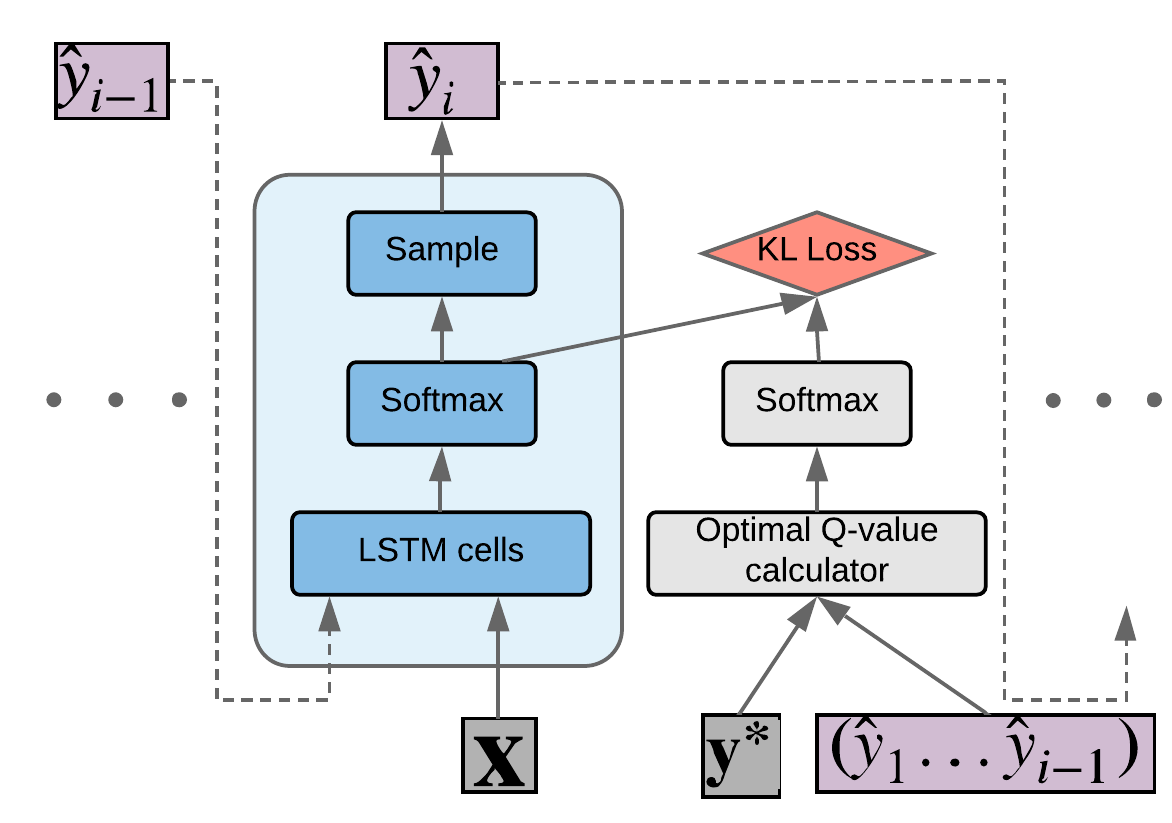}
  \caption{Diagram of different RNN cell input and loss strategies in the baseline seq2seq model vs the baseline with schedule sampling and the DOC approach.}
\label{fig:ocd}
\vspace{-0.4cm}
\end{figure}
}

\comment{Inspired by search based structured prediction~\citep{daumeetal09,ross-aistats-2011}
and policy distillation~\citep{rusu-iclr-2016},
we develop Optimal Completion Distillation (OCD) for optimizing seq2seq models based on edit distance.}
To alleviate the mismatch between inference and training, we {\em never} train on ground truth target sequences.
Instead, we always train on sequences generated by sampling from the current model that is being optimized.
Let $\byt$ denote a sequence generated by \rebut{sampling from} the current model, and $\bys$ denote the ground truth target.
Applying MLE to autoregressive models casts the problem of sequence learning as optimizing a mapping $(\bx, \bys_{<t}) \to \ys_t$
from ground truth prefixes to correct next tokens. By contrast, the key question that arises when training on model samples is the choice
of targets for learning a similar mapping $(\bx, \byt_{<t}) \to \,??$ from generated prefixes to next tokens. Instead of
using a set of pre-specified targets, OCD solves a prefix-specific problem to find optimal extensions that lead to the
best completions according to the task evaluation metric. Then, OCD encourages
the model to extend each prefix with the set of optimal choices for the next token.

Our notion of optimal completion depends on the task evaluation metric denoted $R(\cdot, \cdot)$, which measures
the similarity between two complete sequences, \eg~the ground truth
target \vs~a generated sequence. Edit distance is a common task metric.
Our goal in sequence learning is to train a model, which achieves high scores of $R(\bys,\byt)$.
Drawing \rebut{connection} with the goal of reinforcement learning~\citep{suttonbarto98}, let us recall the notion of optimal Q-values.
Optimal Q-values for a state-action pair $(s,a)$, denoted $Q^*(s,a)$, represent the maximum future reward that an agent can accumulate
after taking an action $a$ at a state $s$ by following with optimal subsequent actions. Similarly, we define Q-values
for a prefix $\byt_{<t}$ and the extending token $a$, as the maximum score attainable by concatenating $[\byt_{<t}, a]$
with an optimal suffix $\by$ to create a full sequence $[\byt_{<t}, a, \by]$. Formally,
\begin{equation}
   \forall a \in \V,\qquad Q^*(\byt_{<t}, a) ~=~ \mmax{\by \in \Y} \,R(\bys, [\byt_{<t}, a, \by])~.
\label{eq:reward}
\end{equation}
Then, the optimal extension for a prefix $\byt_{<t}$ can be defined as tokens that attain the maximal Q-values,
\ie~$\mathrm{argmax}_{a} Q^*(\byt_{<t}, a)$. This formulation allows for a prefix $\byt_{<t}$ to be sampled
on-policy from the model $p_\theta$, or drawn off-policy \rebut{in any way}. \tabref{tab:sample} includes an example ground truth target from the Wall Street Journal dataset and the corresponding
generated sample from a model. We illustrate that for some prefixes there exist more than a single optimal extension leading
to the same edit distance.

Given $Q$-values for our prefix-token pairs, we use an exponential transform followed by normalization 
to convert Q-values to a soft optimal policy over the next token extension, 
\begin{align}
    \pi^*(a \mid \byt_{<t}) = \frac{\exp ({Q^*(\byt_{<t}, a)}/\tau)}{\sum_{a'} \exp {(Q^*(\byt_{<t}, a')/\tau)}}~,
\label{eq:pis}
\end{align}
where $\tau \ge 0$ is a temperature parameter. Note the similarity of $\tau$ and the label smoothing parameter helpful within MLE.
In our experiments, we used the limit of $\tau \to 0$ resulting in hard targets and no hyper-parameter tuning.

{Given a training example $(\bx, \bys)$, we first draw a full sequence $\byt \sim p_\theta(\cdot\mid\bx)$ {\em i.i.d.} from the current model, 
and then minimize a per-step KL divergence between the optimal policy and the model distribution over the next token extension at each time step $t$. The OCD objective is
expressed as,
\comment{
\begin{equation}
 \mathcal{O}_{\text{OCD}}(\theta, (\bx, \bys), \byt) ~=~ \sum\nolimits_t \kl{\pi^*(\cdot \mid \byt_{<t})}{p_\theta(\cdot \mid \byt_{<t}, \bx)}~.
\end{equation}
}
\begin{eqnarray}
\mathcal{O}_{\text{OCD}}(\theta) &=& \E\nolimits_{(\bx, \bas) \sim p_\mathcal{D}} \E_{\byt \sim p_\theta(\cdot\mid\bx)} \sum\nolimits_{t=1}^{\lvert \byt \rvert} \kl{\pi^*(\cdot \mid \byt_{<t})}{\pitt(\cdot \mid \byt_{<t}, \bx)}~.
\label{eq:ocdll}
\end{eqnarray}
\comment{
\begin{equation}
    \exp ({Q^*(\byt_{<t}, a)}/\tau) = \mmax{a'} ~Q^*(\byt_{<t}, a') - Q^*(\byt_{<t}, a) + 1
\end{equation}

\begin{equation}
\mathcal{O}_{\text{OCD}}(\theta) = \sum\nolimits_{(\bx, \bas) \in \mathcal{D}} C_{\byt \sim p_\theta}(\bx, \bys).
\label{eq:ocdll}
\end{equation}
\begin{equation}
C_{\byt \sim p_\theta}(\bx, \bys) = \sum\nolimits_t \kl{\pi^*(\ra_t \mid \byt_{<t})}{p_\theta(\ra_t \mid \byt_{<t}, \bx)}~.
\end{equation}
}%
For every prefix $\byt_{<t}$, we compute the optimal $Q$-values and use \eqref{eq:pis} to construct the optimal policy distribution $\pi^*$.
Then, we distill the knowledge of the optimal policy for each prefix $\byt_{<t}$ into the parametric model using a KL loss.}
For the important class of sequence learning
problems where {\it edit distance} is the evaluation metric,
we develop a dynamic programming algorithm to calculate \rebut{optimal} $Q$-values exactly and efficiently for all prefixes of a sequence $\byt$, discussed below. 
\comment{
\rebut{
\begin{equation}
\mathcal{O}_{\text{AGG}}(\theta) ~=~ \E\nolimits_{(\bx, \bas) \sim p_\mathcal{D}} \E_{\byt \sim p_\theta(\cdot\mid\bx)} \sum\nolimits_{t=1}^{\lvert \byt \rvert} Q^*(\byt_{<t}, \byt_t)~.
\label{eq:ocdagg}
\end{equation}
}
}
\begin{table}[t]
\vspace*{-.2cm}
\begin{center}
\begin{tabular}{@{}ll@{}}
\toprule
Target sequence $\bys$ & \texttt{~a~s~\_~h~e~\_~t~a~l~k~s~\_~h~i~s~\_~w~i~f~e}\\[.1cm]
Generated sequence $\byt$ & \texttt{~a~s~\_~e~e~\_~t~a~l~k~s~\_~w~h~o~s~e~\_~w~i~f~e}\\[.1cm]
Optimal extensions for & \texttt{\color{blue} a~s~\_~h~e~\_~t~a~l~k~s~\_~h~i~i~s~\_~\_~w~i~f~e}\\
edit distance (OCD  & \texttt{\color{blue} ~~~~~~~~h~~~~~~~~~~~~~~~~~h~~~i~~~w}\\
targets)           & \texttt{\color{blue} ~~~~~~~~\_}\\
\comment{\hline
\hline
Hamming mismatch & a~s~\_~{\color{red}e}~e~\_~t~a~l~k~s~\_~{\color{red}wh~o~s~e~\_~w~i~f~e}\\
\hline
Edit distance credit assignment & a~s~\_~{\color{red}e}~e~\_~t~a~l~k~s~\_~{\color{red}w}h~{\color{red}o}~s~{\color{red}e}~\_~w~i~f~e\\ }
\bottomrule
\end{tabular}
\end{center}
\caption{A sample sequence $\bys$ from the Wall Street Journal dataset, where the model's prediction $\byt$ is not perfect. The optimal next characters for each prefix of $\byt$ based on edit distance are shown in blue. For example, for the prefix ``as\_e'' there are $3$ optimal next characters of ``e'', ``h'', and ``\_''. All of these $3$ characters when combined with proper suffixes will result in a total edit distance of $1$.
}\label{tab:sample}
\vspace*{-.2cm}
\end{table}

\comment{helps here and we use an almost zero $\tau = 0.001$ for hard targets. We keep the formulation more general to be consistent with~\citep{hinton-nips-2014,rusu-iclr-2016} and indicate how label smoothing can be done in this case. 
sequence of tokens $\by'$ that, when concatenated produces
a full sequence $[\byt_{<t}, a, \byt']$ closest to $\bys$. }
\comment{Unlike reinforcement learning, in supervised sequence prediction one has access to ground truth sequences, and one can afford to do some exhaustive search
to find the near optimal completions. A natural strategy is to search over the concatenations of the prefix and the ground truth sequence to find
a lower bound on $Q^*$-values as in learning to search literature.}

\comment{
Optimal Q-values are uniquely defined for Markov Decision Processes (MDPs) under mild assumptions.

For each prefix of the generated sequence, we identify the set of all optimal suffixes thatresult in minimum total edit distance, using an efficient dynamic programming algorithm.3.At each position, we minimize a cross-entropy loss to encourage the model to completeeach prefix by selecting one of the optimal next tokens.
}

\comment{Inspired by prior work on search based structured prediction~\citep{daumeetal09,ross-aistats-2011},
Q-learning~\citep{watkins1992q}, and policy distillation~\citep{rusu-iclr-2016},}
\comment{
Based on reinforcement learning~\citep{suttonbarto98} techniques, i.e. Q-learning, we present a novel solution to both of the problems
discussed above.
Recall that the optimal Q-value for a state-action pair in a RL environment, denoted $Q^*(s, a)$, represents
the maximum future reward that an agent can accumulate after taking an action $a$ at a state $s$ by following up with the optimal actions.
Optimal Q-values are uniquely defined for Markov Decision Processes (MDPs) under mild assumptions~\citep{suttonbarto98}.
If one has access to optimal Q-values for all states and actions, then one can construct the optimal policy by selecting the action
with maximum Q-value at each state. Q-learning improves the estimate of Q-values iteratively by resorting to bootstrapping based on 
the Bellman optimality equations.

\comment{
Q-learning is an optimistic reinforcement learning technique which for any finite Markov decision process determines the optimal stationary policy ($\pi^*$) of action selection at each state to maximize a total reward $R$ over all next steps \citep{watkins1992q}. In Q-learning the quality ($Q$) of an action $a$ in a state $s$ is the expectation of maximum attainable reward. The challenge in Q-learning is to estimate the Q-values for an optimal policy, which refers to as $Q^*(s, a)$. Q-learning estimates $Q^*$-values by bootstrapping future values\comment{ according to the Bellman optimality equations}.
}

To build robust autoregressive seq2seq models, we would like the model to be able to predict the best token given
any prefix of \new{TODO}

\acronym's goal is to generate the sequence $\hat{\by}$ one token at a time given a generated prefix $\hat{\by}_{<t}$ such that the final $\hat{\by}$ is similar to $\bys$ evaluated by a reward function $R(\bys, \hat{\by})$. In this setup, a state is a sequence prefix, and an action is selecting a token from the vocabulary.
We show that in this case when $R$ is edit distance\comment{ (possibly weighted)}, one can efficiently
compute $Q^*$-values without any bootstrapping or search.
\comment{Inspired by prior work~\cite{l2s2005,ross-aistats-2011,ranzato-iclr-2016},
we make connections between this problem and reinforcement learning,
and in the notion optimal $Q$-values used in the Q-learning algorithm.  The goal of Q-learning is to estimate the optimal value of each state-action pair,
denoted $Q^*(s,a)$, which represents the maximal achievable reward if one takes an action $a$ at a state $s$ and continues with the optimal policy.}

\comment{
\mohammad{this seems more related to SS vs. OCD -- let's move it to the end of the section} As an example of a flaw in teacher forcing the log probability of the correct sequence, consider the target sequence: ``{\it The cat is here.}'' and a character based model has predicted ``{\it A ca}'' as the first $4$ characters. In teacher forcing, the model is trained to predict $y^*_4=$`a' regardless of the model predictions so far. On the other hand, considering the characters generated so far the next token that can lead to the minimum {\it TER} is `t' at time step $4$. 
}
\comment{
Our high level motivation is that ideas from Q-learning and policy distillation can help mitigate the issues faced by current sequence learning algorithms. }

\comment{More generally, one can combine
ideas from \secref{sec:oced} with bootstrapping from Q-learning to develop a general variant of OCD, which is applicable to any sequence reward function.

. Furthermore, the exact $Q^*$ can be calculated at each state regardless of the model, therefore  we do not need bootstrappping anymore. Also, calculating $Q^*$ at each step solves the credit assignment problem that cripples the convergence of most reinforcement learning techniques.

In OCD we map the sequence prediction task to Q-learning by defining each state as the input prefix so far ($\ba_{<t}$), and the action $a$ is which token to choose as the next character, conditioned on a particular $(\bx, \bys)$ pair.

We define $Q^*(\ba_{<t}, a)$ as the maximum attainable reward at this state:
\begin{align}
    Q^*(\ba_{<t}, a) = \mmax{\ba' \in \mathcal{Y}} \,R([\ba_{<t}, a, \ba'], \bys),
\label{eq:reward}
\end{align}
where
}

Given the \comment{optimal }$Q^*$-values for all prefix-token pairs, we define the
optimal completion policy for a given prefix $\ba_{<t}$ as,
\begin{align}
    \pi_\tau^*(a \mid \ba_{<t}) = \frac{\exp ({Q^*(\ba_{<t}, a)}/\tau)}{\sum_{a'} \exp {(Q^*(\ba_{<t}, a')/\tau)}}~,
\end{align}
for a label smoothing term $\tau$.  We are not
convinced that smoothness helps here and we use an almost zero $\tau = 0.001$ for hard targets. We keep the formulation more general to be consistent with~\citep{rusu-iclr-2016} and indicate how label smoothing can be done in this case. 

We distill~\citep{hinton-nips-2014}
the optimal completion policy \comment{from our teacher} into our \comment{student} model.
For training, we sample a sequence from the model denoted $\byt$, and for each position $t$, we optimize
the Kullback-Leibler (KL) divergence between the soft optimal
completion policy $\pi^*$ and the prediction of the current model, expressed as,
\begin{equation}
 \mathcal{O}_{\text{OCD}}(\theta, \byt) ~=~ \sum\nolimits_t \kl{\pi_\tau^*(a \mid \byt_{<t})}{p_\theta(a \mid \byt_{<t})}~.
\end{equation}

}

\vspace*{-.2cm}
\subsection{Optimal Q-values for Edit Distance}
\vspace*{-.1cm}
\label{sec:oced}

We propose a dynamic programming algorithm to calculate optimal
Q-values exactly and efficiently for the reward metric of negative edit distance,
\ie~$R(\bys,\byt) = -\ed(\bys,\byt)$. Given two sequences $\bys$ and $\byt$, we compute the Q-values for
every prefix $\byt_{<t}$ and any extending token $a \in \V$ with an asymptotic complexity of $O(|\bys|.|\byt| + |\V|.|\byt|)$.
Assuming that $|\bys| \approx |\byt| \leq |\mathcal{V}|$, our algorithm does not increase the time
complexity over MLE, since 
computing the cross-entropy losses in MLE also requires
a complexity of $O(|\bys|.|\V|)$. \rebut{When this assumption does not hold, \eg~genetic applications, OCD is less efficient than MLE.
However, in practice, the wall clock time is dominated by the forward and backward passes of a neural networks,
and the OCD cost is often negligible. We discuss the efficiency of OCD further in Appendix~\ref{sec:algorithm}.}

\renewcommand{\arraystretch}{1.1}
\begin{table}[b]
\caption{Each row corresponds to a prefix of ``SATRAPY'' and shows
edit distances with all prefixes of ``SUNDAY''. We also show OCD targets (optimal extensions) for each prefix,
and minimum value along each row, denoted $m_i$ \rebut{(see~\eqref{eq:min-ed-bound})}.
We highlight the trace path for $\ed(\text{``Satrapy''}, \text{``Sunday''})$.}
\label{tab:edit}
\centering
\small
\begin{tabular}{@{}|>{\columncolor[HTML]{C0C0C0}}l|l|l|l|l|l|l|l|l|c|@{}}
\hline
\multicolumn{8}{|c|}{Edit Distance Table} & \acronym Targets & $m_i$\\ \hline
\rowcolor[HTML]{C0C0C0}
 &  & S & U & N & D & A & Y & & \\ \hline
 & \cellcolor{Ccolor}{\bf \textcolor{Fcolor}0} & 1 & 2 & 3 & 4 & 5 & 6 & S &  {\bf \textcolor{Fcolor}0}\\ \hline
S & 1 & \cellcolor{Ccolor}{\bf \textcolor{Fcolor}0} & 1 & 2 & 3 & 4 & 5 & U &  {\bf \textcolor{Fcolor}0}\\ \hline
A & 2 & {\bf\textcolor{Fcolor} 1} & \cellcolor{Ccolor}{\bf \textcolor{Fcolor} 1} & 2 & 3 & 3 & 4 & U, N &  {\bf \textcolor{Fcolor}1} \\ \hline
T & 3 & {\bf \textcolor{Fcolor}2} & {\bf\textcolor{Fcolor} 2} & \cellcolor{Ccolor}{\bf \textcolor{Fcolor}2} & 3 & 4 & 4 & U, N, D &  {\bf\textcolor{Fcolor}2} \\ \hline
R & 4 & {\bf \textcolor{Fcolor}3} & {\bf \textcolor{Fcolor}3} & {\bf \textcolor{Fcolor}3} & \cellcolor{Ccolor}{\bf\textcolor{Fcolor} 3} & 4 & 5 & U, N, D, A &  {\bf\textcolor{Fcolor}3}\\ \hline
A & 5 & 4 & 4 & 4 & 4 & \cellcolor{Ccolor}{\bf \textcolor{Fcolor}3} & 4 & Y &  {\bf\textcolor{Fcolor}3}\\ \hline
P & 6 & 5 & 5 & 5 & 5 & \cellcolor{Ccolor}{\bf \textcolor{Fcolor}4} & {\bf \textcolor{Fcolor}4} & Y, \textless{}/s\textgreater~ &  {\bf\textcolor{Fcolor}4}\\ \hline
Y & 7 & 6 & 6 & 6 & 6 & 5 & \cellcolor{Ccolor}{\bf\textcolor{Fcolor} 4} & \textless{}/s\textgreater~ &  {\bf\textcolor{Fcolor}4}\\ \hline
\end{tabular}
\vspace*{-.2cm}
\end{table}
\renewcommand{\arraystretch}{1}

Recall the Levenshtein algorithm \citep{levenshtein1966binary} for calculating the minimum number of edits (insertion, deletion and substitution) required to convert sequences $\byt$ and $\bys$ to each other based on,
\begin{equation}
\begin{cases}
    \ed(\byt_{<-1}, :) = \infty\\
    \ed(:, \bys_{<-1}) = \infty\\
    \ed(\byt_{<0},\bys_{<0}) = 0~,~
\end{cases}
    D_{\text{edit}}(\byt_{<i},\bys_{<j}) = \text{min}\begin{cases}
        D_{\text{edit}}(\byt_{<i-1},\bys_{<j}) + 1\\
        D_{\text{edit}}(\byt_{<i},\bys_{<j - 1}) + 1\\
        D_{\text{edit}}(\byt_{<i-1},\bys_{<j-1}) + \one{\yt_i \neq \ys_j}~.
    \end{cases}
\label{eq:levenshtein}
\end{equation}
\tabref{tab:edit} shows an example edit distance table for sequences ``Satrapy'' and ``Sunday''.
Our goal is to identify the set of all optimal suffixes $\by \in \Y$ that result in a full sequences $[\byt_{<i}, \by]$ with
a minimum edit distance \vs~$\bys$.
\comment{
\begin{equation}
    \min_{\by \in \Y} \ed([\byt_{<i}, \by], \bys) \ge \min_{0 \le j \le |\bys|} \,\ed(\byt_{<i},\bys_{<j})~,
\label{eq:min-ed-bound}
\end{equation}
because for any $\by$ is, according to \eqref{eq:levenshtein}, one can trace $\ed([\byt_{<i}, \by], \bys)$ 
to one of the entries within the row of the edit distance table corresponding to $\forall j\, \ed(\byt_{<i},\bys_{<j})$.
Because the updates in~\eqref{eq:levenshtein} are non-decreasing, hence \eqref{eq:min-ed-bound}. \mohammad{todo}}
\begin{lemma}
\label{th:suffix}
The edit distance resulting from any potential suffix $\by \in \Y$ is lower bounded by $m_i$, 
\begin{equation}
   \forall \by \in \Y,~~ \ed([\byt_{<i}, \by], \bys) ~\ge~ \min_{0 \le j \le |\bys|} \,\ed(\byt_{<i}, \bys_{<j})
   ~=~ m_i~.
\label{eq:min-ed-bound}
\end{equation}
\vspace*{-.4cm}
\end{lemma}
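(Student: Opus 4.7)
The plan is to reduce the inequality to a simple monotonicity argument on paths in the Levenshtein DP table. Concretely, extend the edit distance table so that it is defined jointly for the full sequence $[\byt_{<i}, \by]$ against $\bys$. This gives a $(i-1+|\by|+1) \times (|\bys|+1)$ grid whose bottom-right entry is exactly $\ed([\byt_{<i}, \by], \bys)$. The top $|\byt_{<i}|+1$ rows of this extended table coincide with the edit distance table for $\byt_{<i}$ against $\bys$; in particular, the $|\byt_{<i}|$-th row stores precisely the values $\ed(\byt_{<i}, \bys_{<j})$ for $0 \le j \le |\bys|$, whose minimum is $m_i$.

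The key step is to interpret an optimal alignment between $[\byt_{<i}, \by]$ and $\bys$ as a monotone path from $(0,0)$ to $(|\byt_{<i}|+|\by|,\, |\bys|)$, where each step corresponds to one of the three operations in~\eqref{eq:levenshtein} (insertion, deletion, match/substitution) and has non-negative cost. Because the row index is non-decreasing along the path and the path must reach a row strictly below row $|\byt_{<i}|$, it necessarily crosses row $|\byt_{<i}|$ at some column $j^\star \in \{0,\ldots,|\bys|\}$. The prefix of the path up to that crossing is a valid alignment between $\byt_{<i}$ and $\bys_{<j^\star}$, hence has cost at least $\ed(\byt_{<i}, \bys_{<j^\star}) \ge m_i$; the remainder of the path has non-negative cost. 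Summing these two lower bounds yields $\ed([\byt_{<i}, \by], \bys) \ge m_i$.

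The only mild obstacle is justifying the path-based view of edit distance, i.e.\ that $\ed(\bu, \bv)$ is the minimum cost of any monotone path in its DP table ending at the bottom-right corner. This is a standard consequence of~\eqref{eq:levenshtein}: by induction on $i+j$, each entry $\ed(\bu_{<i}, \bv_{<j})$ is realised as the cost of some optimal path into cell $(i,j)$, and conversely every monotone path corresponds to a legitimate edit script whose cost is an upper bound on the edit distance. Once this correspondence is in hand, the proof consists of a single sentence about where the optimal path for $[\byt_{<i}, \by]$ crosses row $|\byt_{<i}|$, so the lemma follows with essentially no further computation.
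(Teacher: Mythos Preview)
Your proposal is correct and follows essentially the same argument as the paper: both consider an optimal path through the Levenshtein DP table for $[\byt_{<i},\by]$ against $\bys$, observe that it must cross row $i$ at some column $j^\star$ (the paper calls it $k$), and use the non-negativity of edit costs (equivalently, the monotonicity of DP values along the traceback) to conclude $\ed([\byt_{<i},\by],\bys)\ge \ed(\byt_{<i},\bys_{<j^\star})\ge m_i$. Your write-up is more explicit about justifying the path/alignment correspondence, but the idea is identical.
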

\begin{proof}
Let's consider the path $P$ that traces $\ed([\byt_{<i},\by], \bys)$ back to $\ed(\byt_{<0}, \bys_{<0})$ connecting each cell to an adjacent parent cell, which provides the minimum value among the three options in \eqref{eq:levenshtein}. Such a path for tracing edit distance between ``Satrapy'' and ``Sunday'' is shown in Table~\ref{tab:edit}.  Suppose the path $P$ crosses row $i$ at a cell $(i, k)$. Since the operations in \eqref{eq:levenshtein} are non-decreasing, the edit distance along the path cannot decrease, so $\ed([\byt_{<i},\by], \bys) \geq \ed(\byt_{<i}, \bys_{<k}) \ge m_i$.
\vspace*{-.2cm}
\end{proof}

\comment{
Since $D_{\text{edit}}(\byt_{<i},\bys_{<j})$ never decreases as $i$ or $j$ increase according to~\eqref{eq:levenshtein}. Hence,
\begin{equation}
  \forall a, \qquad  Q^*(\byt_{<i}, a) \leq -\mmin{j < |\bys|} \,D_{\text{edit}}(\byt_{<i},\bys_{<j}),
\end{equation}
Therefore, the suffix of $\bys_{<j}$ who has minimum edit distance with $\byt_{<i}$ are a subset of optimal completions for $\byt_{<i}$. Furthermore, we prove that
\begin{lemma}
\label{th:suffix}
The set of optimal completions is limited to only suffixes of $\bys$.
\begin{equation}
\forall s \in \mathcal{Y},~~~    s = \argmin{\byt' \in \mathcal{Y}} \, D_{\text{edit}}([\byt_{<i},\byt'], \bys) \implies \exists j,~~~ s = \bys_{j\leq} 
\end{equation}
\end{lemma}
\begin{proof}

By contradiction, assume
\begin{equation}
    \exists s \in \mathcal{Y},~~~    s = \argmin{\byt' \in \mathcal{Y}} \, D_{\text{edit}}([\byt_{<i},\byt'], \bys) , ~~~\forall j: D_{\text{edit}}([\byt_{<i},s], \bys) \leq D_{\text{edit}}([\byt_{<i},\bys_{j\leq}], \bys) 
\end{equation}
Among which, consider the sequence $s$ which has maximum shared suffix with $\bys$. Reverse both sequences. $D_{\text{edit}}([\overline{s},\overline{\byt}_{<i}],\overline{\bys}) = D_{\text{edit}}([\byt_{<i},s], \bys)$. Assume the first discrepancy between $\overline{s}$ and $\overline{\bys}$ happens at index $h$. Such $h$ exists otherwise $s$ would have been a suffix of $\bys$. Replacing $\overline{s}_h$ with $\overline{\bys}_h$ does not increase the edit distance. Despite that, it increases the shared suffix of $s$ with $\bys$ which contradicts with $s$ having the maximum shared suffix.
\end{proof}
}

Then, consider any $k$ such that $\ed(\byt_{<i}, \bys_{<k}) = m_i$. Let $\bys_{\geq k} \equiv (y^*_k, \ldots, y^*_{\lvert \bys \rvert})$ denote a suffix of $\bys$. We conclude that $\ed([\byt_{<i}, \bys_{\geq k}], \bys) = m_i$, because on the one hand there is a particular edit path that results in $m_i$ edits,
and on the other hand $m_i$ is a lower bound according to Lemma~\ref{th:suffix}. Hence any such $\bys_{\geq k}$ is an optimal suffix for $\byt_{<i}$. Further,
it is straightforward to prove by contradiction that the set of optimal suffixes is limited to suffixes $\bys_{\geq k}$ corresponding to
$\ed(\byt_{<i}, \bys_{<k}) = m_i$.

Since the set of optimal completions for $\byt_{<i}$ is limited to $\bys_{\geq k}$, the only extensions that can lead to maximum reward are the starting token of such suffixes ($\ys_k$). Since $\ed(\byt_{<i}, \bys_{<k}) = m_i$ as well, we can identify the optimal extensions by calculating the edit distances between all prefixes of $\byt$ and all prefixes of $\bys$ which can be efficiently calculated by dynamic programming in $\mathcal{O}(|\byt|.|\bys|)$. For a prefix $\byt_{<i}$ after we calculate the minimum edit distance $m_i$ among all prefixes of $\bys$, we set the $Q^*(\byt_{<i}, \ys_{k}) = -m_i $ for all $k$ where $\bys_{<k}$ has edit distance equal to $m_i$. We set the $Q^*$ for any other token to $-m_i-1$.
 We provide the details of our modified Levenshtein algorithm to efficiently compute the $Q^*(\byt_{<i}, a)$ for all $i$ and $a$ in Appendix~\ref{sec:algorithm}.

 \comment{
\begin{table}[b]
\caption{Each row corresponds to a prefix of ``SATRAPY'' and shows
edit distances with all prefixes of ``SUNDAY''. We also show OCD targets (optimal next tokens) for each prefix,
and minimum value along each row, denoted $m_i$.
We highlight the race path for $\ed(\text{``Satrapy''}, \text{``Sunday''})$.}
\label{tab:edit}
\vspace*{-.2cm}
\centering
\small
\begin{tabular}{@{}|>{\columncolor[HTML]{C0C0C0}}l|l|l|l|l|l|l|l|l|l|@{}}
\hline
\multicolumn{8}{|c|}{Edit Distance Table} & OCD targets & $m_i$\\ \hline
\rowcolor[HTML]{C0C0C0}
&  & S & U & N & D & A & Y &  \\ \hline
& \cellcolor{Ccolor}0 & 1 & 2 & 3 & 4 & 5 & 6 & S 
\\ \hline
S & 1 & \cellcolor{Ccolor}0 & 1 & 2 & 3 & 4 & 5 & U 
\\ \hline
A & 2 & \cellcolor{Ccolor}1 & 1 & 2 & 3 & 3 & 4 & U, N 
\\ \hline
T & 3 & \cellcolor{Ccolor}2 & 2 & 2 & 3 & 4 & 4 & U, N, D 
\\ \hline
U & 4 & 3 & \cellcolor{Ccolor}2 & 3 & 3 & 4 & 5 & N 
\\ \hline
R & 5 & 4 & 3 & \cellcolor{Ccolor}3 & 4 & 4 & 5 & N, D 
\\ \hline
D & 6 & 5 & 4 & 4 & \cellcolor{Ccolor}3 & 4 & 5 & A 
\\ \hline
A & 7 & 6 & 5 & 5 & 4 & \cellcolor{Ccolor}3 & 4 & Y 
\\ \hline
Y & 8 & 7 & 6 & 6 & 5 & 4 & \cellcolor{Ccolor}3 & \textless{}/s\textgreater
\\ \hline
\end{tabular}
\vspace*{-.2cm}
\end{table}
}

\vspace{-.3cm}
\section{Related Work}
\vspace{-.2cm}
\label{sec:relatedwork}
\comment{
OCD considers the evaluation reward as the optimization goal, similar to Reinforcement Learning  (RL) techniques. Furthermore, it can trace the reward of edit distance back to time steps in the model trajectory, resembling the Imitation Learning techniques. Also, OCD can calculate the step reward for all actions, close to the aim of the Learning to Search techniques. In contrast to prior work, by focusing on Edit Distance, OCD proposes an algorithm to calculate step reward for all actions more precisely and more efficiently. Also, OCD's formulation of the task (combination of optimal reward and KL) has lower variance which is apparent by its ability to train only on model samples unlike other methods. We now discuss OCD's relation to specific prior work in details.

\textbf{Trajectory Reward Optimization.} Approaches based on reinforcement-learning have been applied to sequence prediction problems,
including REINFORCE \citep{ranzato-iclr-2016}, Actor-Critic \citep{bahdanau-iclr-2017} and Self-critical Sequence Training \citep{rennie-cvpr-2017}.
These methods sample sequences from the model's distribution and backpropagate a sequence-level task objective (\eg~edit distance).
Beam Search Optimization \citep{wiseman-emnlp-2016} and Edit-based Minimum Bayes Risk (EMBR)~\citep{prabhavalkar-icassp-2018} is similar,
except the sampling procedure is replaced with beam search.
These training methods suffer from high variances and credit assignment problems.
\comment{The use of sequence-level cost makes it difficult for the model to learn the subtleties of the loss, and this is especially true for long sequences.}
By contrast, OCD takes advantage of the decomposition of the sequence-level objective into token level optimal completion targets.
This reduces the variance of the gradient and stabilizes the model.
Crucially, unlike most RL-based approaches, we neither need MLE pretraining or joint optimization with log-likelihood.

Reward Augmented Maximum Likelihood (RAML) \citep{norouzi-nips-2016} and related algorithms~\citep{koyamada2017,ma2017softmax,elbayad2018token,wang2018switchout} are also similar
to RL-based approaches. Instead of sampling from the model's distribution, RAML samples sequences from the true exponentiated reward distribution.
However, sampling from the true distribution is often difficult and intractable. \rebut{SPG \citep{ding2017cold} changes the policy gradient formulation to sample from a reward shaped model distribution. Therefore, its samples are closer than RAML samples to model samples. In order to facilitate sampling from their proposed distribution SPG points out the decomposability of their metrics (ROUGE). Although they have lower variance due to their biased samples, SPG suffers from the same problems as RAML and RL-based methods in credit assignment.}
\comment{for long sequences. Additionally, it is unclear whether sampling from the true distribution is the correct thing to do, since during inference, the model samples from the model's posterior.}

\textbf{Trajectory Reward Optimization per Step.} In order to solve the credit assignment problem, one can try to trace the final reward back into per step rewards. OCD lies in this category of methods and is inspired by Imitation Learning and Learning to Search (L2S)
techniques where a student policy is optimized with a policy at each step of the trajectory.
There are two questions to be addressed:

\textbf{Q1)} Given an expert teacher ($\pi^*$ and/or $Q^*$) how to optimize the student?

\textbf{A1.1)} DAgger \citep{ross-aistats-2011} answers this question and in particular is closely
related to OCD. DAgger iterates between augmenting ground truth with new model samples and mimicking expert's policy ($\pi^*$) on the full aggregated dataset. Similar to DAgger our goal is to mimic $\pi^*$ but only on model samples. Also, since in practice $\pi^*$ is only given for expert's trajectory we derive $\pi^*$ for new states based on reward ($Q^*$) and do not rely on having access to experts policy for new states.
\comment{\citep{sun2018truncated}}

Scheduled Sampling (SS)~\citep{bengio-nips-2015} and Data as Demonstrator (DaD)~\citep{dad2015} are instantiations of DAgger that aim to alleviate the mismatch between training and inference prefix distributions. SS generates prefixes by substituting some of the ground truth tokens with samples from the model via a tunable mixing schedule. SS uses the ground truth sequence as the training targets uninformed of the synthetic prefixes. Hence their targets are Hamming Distance optimal. By contrast, OCD draws tokens via $100\%$ sampling from the model, without any schedule. While SS can only handle substitution errors, OCD considers insertions and deletions as well by solving an alignment problem to find Edit Distance optimal targets.

\textbf{A1.2)} Another answer closely related to our setup is AggreVated \citep{sun2017deeply} which provides the online and natural derivatives of AggreVaTe \citep{ross2014reinforcement}, suitable for training with LSTMs. In this line of work, they assume access to $Q^*$ for model policy sampled action-state pairs and they still train on a mixture of ground truth and model trajerctories. Their objective is to minimize the per step cost-to-go $\mathop{\mathbb{E}}_{a \sim \pi(\cdot | s_t)}Q_t^*(s_t,a)$ and they define the online gradient as $\sum_{a}\nabla_{\theta}\pi(a|s_t;\theta) Q_t^*(s_t, a)$ which is close to optimizing reverse KL without optimizing the entropy. OCD while being similar in having a  $Q^*$ sensitive objective, optimizes the proper KL with the normalized exponentiation of $Q^*$ which is SGD friendly. Therefore, OCD is able
to have {\em roll-in} prefixes drawn only from the model. \cite{cheng2018convergence} showed that
mixing in ground truth samples is an essential regularizer for value
aggregation convergence in imitation learning. Furthermore, we calculate $Q^*$ directly for our task without assuming it exists or rolling out the expert. 

\textbf{Q2)} How to calculate $\pi^*$, $Q^*$ or $Q^\pi$ for rewards other than Hamming Distance in practice?

\textbf{A2.0)} \rebut{One may answer this with a structured reward specific algorithm like \cite{goldberg2012dynamic} where they provide an efficient dynamic oracle for parse tree generation with un-ordered but restricted edges and optimize a margin loss on most probable edge. OCD similarly provides a dynamic oracle but for edit distance on sequence learning and distills all the optimal targets.}

\textbf{A2.1)} One general answer is the L2S \citep{l2s2005}
literature where works like LoLS \citep{chang2015learning} and \cite{goodman2016noise}
estimate the cost of actions by examining
multiple {\em roll-outs} of a generated
prefix and they address which action to expand. SeaRNN~\citep{searnn2017} is an adoptation of L2S for RNNs which approximates the cost of
each token by computing the task loss for vocabulary size roll-outs
at each time step. Then in one setup they optimize the KL of $Q^\pi$ with the RNN. OCD has similar architectural components. Also our cost-sensitive loss is similarly the KL objective but with $Q^*$. Furthermore, the exploration makes SeaRNN's per step time complexity $O(VT)$ and difficult to scale to real world
datasets, where either the sequences are long or the
vocabulary is large. While SeaRNN is a general approach, OCD exploits the special structure in edit distance
and find $Q^*$ by per step complexity of $O(V+T)$. Hence, unlike L2S and SeaRNN, which require
ground truth prefixes to stabilize training, we are able to train only on model samples.

\textbf{A2.2)} Another answer more closely related to our work is Policy
Distillation~\citep{rusu-iclr-2016}, where a deep Q-Network (DQN)
agent~\citep{mnihetal15} is optimized as the teacher first. Then, action sequences are sampled from
the teacher and the learned Q-value estimates are distilled~\citep{hinton-nips-2014} into a
smaller student network using a KL loss. OCD adopts a similar loss function,
but rather than estimating Q-values using bootstrapping,
we estimate exact $Q$-values using dynamic programming. Moreover, we draw
samples from the student rather than the teacher.

\cite{bahdanau-iclr-2016} also noticed some of the nice structure of edit distance, but they optimize the model
by regressing its outputs to edit distance values leading to suboptimal performance. Rather, we first construct the optimal policy and then use knowledge distillation for training. \rebut{\cite{karita-icassp-2018} also noticed the decomposability of the edit distance cost function, however they only used this to reweight the policy gradients (likely-hood of the prefix) at a token level rather than a cost-sensitive classification of actions. Therefore, they still suffered from high variance and had to resort to MLE pretraining.}

Generally, \acronym excels at training from scratch, which makes it an ideal substitution for MLE. Hence, OCD is orthogonal to methods which require
MLE pretraining or joint optimization.
}

Our work \rebut{builds upon} Learning to Search \citep{l2s2005} and Imitation Learning
techniques~\citep{ross-aistats-2011,ross2014reinforcement,sun2018truncated},
where a student policy is optimized to imitate an expert teacher.
DAgger \citep{ross-aistats-2011} in particular is closely
related, where a dataset of trajectories \rebut{from an expert teacher is
aggregated with samples from past student models, 
and a policy is optimized to mimic a given expert policy $\pi^*$ at various states.
Similarly, OCD aims to mimic an optimal policy $\pi^*$ at all prefixes, but in OCD, the behavior policy is directly obtained from an online student. Further, the oracle policy is not provided during training, and we obtain the optimal policy by finding optimal $Q$-values.
AggreVaTeD \citep{sun2017deeply} assumes access to an unbiased estimate of Q-values and relies on variance reduction techniques and conjugate gradients
to address a policy optimization problem.} OCD calculates
exact Q-values and uses regular SGD for optimization. Importantly, our {\em roll-in} prefixes are drawn only from the student model,
and we do not require mixing in ground truth (\aka~expert) samples. \cite{cheng2018convergence} showed that
mixing in ground truth samples is an essential regularizer for value aggregation convergence in imitation learning.

\rebut{
Our work is closely related to Policy
Distillation~\citep{rusu-iclr-2016}, where a Deep Q-Network (DQN)
agent~\citep{mnihetal15} that is previously optimized is used as the expert teacher.}
Then, action sequences are sampled from
the teacher and the learned Q-value estimates are distilled~\citep{hinton-nips-2014} into a
smaller student network using a KL loss. OCD adopts a similar loss function,
but rather than estimating Q-values using bootstrapping,
we estimate exact $Q$-values using dynamic programming. Moreover, we draw
samples from the student rather than the teacher.
 
\rebut{
Similar to OCD, the learning to search (L2S)
techniques such as LOLS \citep{chang2015learning} and \cite{goodman2016noise}
also attempt to estimate the Q-values for each state-action pair. Such techniques examine}
multiple {\em roll-outs} of a generated
prefix \rebut{and aggregate the return values}. SeaRNN~\citep{searnn2017} approximates the \rebut{cost-to-go} for
each token by computing the task loss for \rebut{as many roll-outs as the vocabulary size}
at each time step \rebut{with a per step complexity of $O(VT)$. It is often difficult to scale approaches based on
multiple roll-outs} to real world
datasets, where either the sequences are long or the
vocabulary is large. 
OCD exploits the special structure in edit distance
and find exact Q-values efficiently \rebut{in $O(V+T)$ per step}. Unlike L2S and SeaRNN, which require
ground truth prefixes to stabilize training, we solely train on model samples.

Approaches based on Reinforcement Learning (RL) have also been applied to sequence prediction problems,
including REINFORCE \citep{ranzato-iclr-2016}, Actor-Critic \citep{bahdanau-iclr-2017} and Self-critical Sequence Training \citep{rennie-cvpr-2017}.
These methods sample sequences from the model's distribution and backpropagate a sequence-level task objective (\eg~edit distance).
Beam Search Optimization \citep{wiseman-emnlp-2016} and Edit-based Minimum Bayes Risk (EMBR)~\citep{prabhavalkar-icassp-2018} is similar,
but the sampling procedure is replaced with beam search.
These training methods suffer from high variances and credit assignment problems.
By contrast, OCD takes advantage of the decomposition of the sequence-level objective into token level optimal completion targets.
This reduces the variance of the gradient and stabilizes the model.
Crucially, unlike most RL-based approaches, we neither need MLE pretraining or joint optimization with log-likelihood.
\cite{bahdanau-iclr-2016} also noticed some of the nice structure of edit distance, but they optimize the model
by regressing its outputs to edit distance values leading to suboptimal performance. Rather, we first construct the optimal policy and then use knowledge distillation for training. \rebut{Independently, \cite{karita-icassp-2018} also decomposed edit distance into the contribution of individual tokens and used this decomposition within the EMBR framework. That said, \cite{karita-icassp-2018} do not theoretically justify this particular choice of decomposition
and report high variance in their gradient estimates.}

Reward Augmented Maximum Likelihood (RAML) \citep{norouzi-nips-2016} and its variants~\citep{ma2017softmax,elbayad2018token,wang2018switchout} are also similiar
to RL-based approaches. Instead of sampling from the model's distribution, RAML samples sequences from the true exponentiated reward distribution.
However, sampling from the true distribution is often difficult and intractable. RAML suffers from the same problems as RL-based methods in credit assignment. \rebut{SPG \citep{ding2017cold} changes the policy gradient formulation to sample from a reward shaped model distribution. Therefore, its samples are closer than RAML to the model's samples. In order to facilitate sampling from their proposed distribution SPG provides a heuristic to decompose ROUGE score. Although SPG has a lower variance due to their biased samples, it suffers from the same problems as RAML and RL-based methods in credit assignment.}

Generally, \acronym excels at training from scratch, which makes it an ideal substitution for MLE. Hence, OCD is orthogonal to methods which require
MLE pretraining or joint optimization.

\section{Experiments}
\label{sec:experiments}

We conduct our experiments on speech recogntion on the Wall Street Journal (WSJ)~\citep{paul1992design} and Librispeech \citep{librispeech}
benchmarks. We only compare end-to-end speech recognition approaches that do not incorporate language model rescoring.
On both WSJ and Librispeech, our proposed {\bf OCD} (Optimal Completion Distillation) algorithm significantly outperforms our own strong baselines
including {\bf MLE} (Maximum Likelihood Estimation with label smoothing) and {\bf SS} (scheduled sampling with a well-tuned schedule).
Moreover, OCD significantly outperforms all prior work, achieving a new state-of-the-art on two competitive benchmarks.
\subsection{Wall Street Journal}
\begin{figure}[t]
\begin{minipage}{.48\textwidth}
\centering
  \includegraphics[height=.54\linewidth]{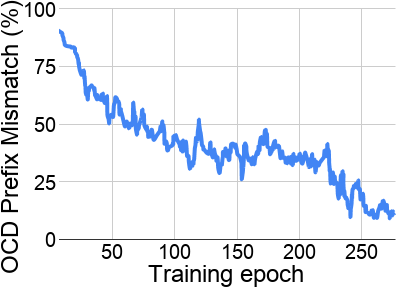}
  \vspace*{.3cm}
  \captionof{figure}{Fraction of \acronym training prefix tokens on WSJ which does not match ground truth.}
  \label{fig:frac}
\end{minipage}
\hspace{.4cm}
\begin{minipage}{.48\textwidth}
\centering
  \includegraphics[height=.54\linewidth]{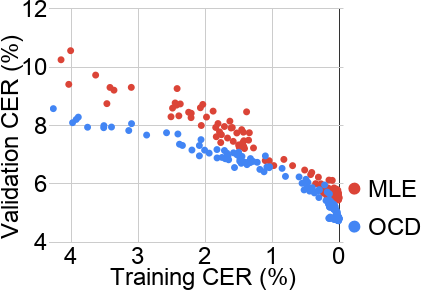}
  \vspace*{.3cm}
  \captionof{figure}{WSJ validation Character Error Rate (CER) per training CER for MLE and \acronym.}
  \label{fig:gen}
\end{minipage}%
\end{figure} 
\begin{table}
\centering
\caption{WSJ Character Error Rate (CER) and Word Error Rate (WER) of different baselines. Schedule Sampling optimizes for \textit{Hamming} distance and mixes samples from the model and ground truth with a \textit{probability} schedule (start-of-training $\rightarrow$ end-of-training). \acronym \textit{always} samples from the model and optimizes for \textit{all} characters which minimize \textit{Edit} distance. Optimal Completion Target optimizes for \textit{one} character which minimizes edit distance and another criteria (shortest or same \#words). }
\label{tab:comparison}
\begin{tabular}{lcc} 
 \toprule
 \textbf{Training Strategy} & \textbf{CER} & \textbf{WER} \\ 
 \midrule
 Schedule Sampling $(1.0 \rightarrow 1.0)$ & 12.1 & 35.6 \\
 Schedule Sampling $(0.0 \rightarrow 1.0)$ & 3.8 & 11.7 \\
 Schedule Sampling $(0.0 \rightarrow 0.55)$ & 3.6 & 10.2 \\
 \midrule
 Optimal Completion Target (Shortest) & 3.8 & 12.7\\
 Optimal Completion Target (Same \#Words) & 3.3 & 10.2\\
 \midrule
 Optimal Completion Distillation & \bf{3.1} & \bf{9.3} \\
 \bottomrule
\end{tabular}
\end{table}
\begin{table}[t]
\centering
\caption{Character Error Rate (CER) and Word Error Rate (WER) results on the end-to-end speech recognition WSJ task. We report results of our Optimal Completion Distillation (\acronym) model, and well-tuned implementations of maximum likelihood estimation (MLE) and Scheduled Sampling (SS).}
\label{tab:results}
\begin{tabular}{lcc} 
 \toprule
 \textbf{Model} & \textbf{CER} & \textbf{WER} \\ 
 \midrule
 Prior Work \\
 \qquad CTC \citeauthoryear{graves-icml-2014} & 9.2 & 30.1 \\
 \qquad CTC + REINFORCE \citeauthoryear{graves-icml-2014} & 8.4 & 27.3 \\
 \qquad Gram-CTC \citeauthoryear{liu-icml-2017} & - & 16.7 \\
 \qquad seq2seq \citeauthoryear{bahdanau-icassp-2016} & 6.4 & 18.6 \\
 \qquad seq2seq + TLE \citeauthoryear{bahdanau-iclr-2016} & 5.9 & 18.0 \\
 \qquad seq2seq + LS \citeauthoryear{chorowski-interspeech-2017} & - & 10.6 \\
 \qquad seq2seq + CNN \citeauthoryear{zhang-icassp-2017} & - & 10.5 \\
 \qquad seq2seq + LSD \citeauthoryear{chan-iclr-2017} & - & 9.6 \\
 \qquad seq2seq + CTC \citeauthoryear{kim-icassp-2017} & 7.4 & - \\
 \qquad seq2seq + TwinNet \citeauthoryear{serdyuk-iclr-2018} & 6.2 & - \\
 \qquad seq2seq + MLE + REINFORCE \citeauthoryear{tjandra-icassp-2018} & 6.1 & - \\
 \midrule
 Our Implementation \\
 \qquad seq2seq + MLE & 3.6 & 10.6 \\
 \qquad seq2seq + SS & 3.6 & 10.2 \\
 \qquad {\bf seq2seq + \acronym} & {\bf 3.1} & {\bf 9.3} \\
 \bottomrule
\end{tabular}
\end{table}
The WSJ dataset is readings of three separate years of the Wall Street Journal.
We use the standard configuration of si$284$ for training, dev$93$ for validation and report both
test Character Error Rate (CER) and Word Error Rate (WER) on eval$92$.
We tokenize the dataset to English characters and punctuation. 
Our model is an attention-based seq2seq network with a deep convolutional frontend as used in~\cite{zhang-icassp-2017}.
During inference, we use beam search with a beam size of $16$ for all of our models. We describe the architecture and hyperparameter details in Appendix \ref{sec:architecture}. 
We first analyze some key characteristics of the OCD model separately, and then compare our results with other baselines and state-of-the-art methods.

{\bf Training prefixes and generalization}. We emphasize that during training, the generated prefixes sampled from the model do not match the ground truth sequence, even at the end of training.
We define OCD prefix {\em mismatch} as the fraction of OCD training tokens that do not match corresponding ground truth training tokens at each position.
Assuming that the generated prefix sequence is perfectly matched with the ground truth sequence, then the OCD targets would simply be the following tokens of the ground truth sequence.
Hence, OCD becomes equivalent to MLE. 
\figref{fig:frac} shows that OCD prefixes mismatch is more than $25\%$ for the most of the training.
This suggests that OCD and MLE are training on very different input prefix trajectories.
Further, \figref{fig:gen} depicts validation CER as a function of training CER for different model checkpoints during training,
where we use beam search on both training and validation sets to obtain CER values.
Even at the same training CER, we observe better validation error for OCD, which suggests that OCD improves generalization of MLE, possibly
because OCD alleviates the mismatch between training and inference.

\comment{
\begin{figure}[t]

\centering
\begin{subfigure}{.5\textwidth}
  \centering
  \includegraphics[width=\linewidth]{train_cer_3}
  \caption{Training CER}
\end{subfigure}%
\begin{subfigure}{.5\textwidth}
  \centering
  \includegraphics[width=\linewidth]{val_cer_3}
  \caption{Validation CER}
\end{subfigure}
\caption{Log scale plots of training and validation Character Error Rate (CER). The naive sampling model always samples from the model, while the MLE model always feeds the ground truth as the input to the RNN. Both of these models are trained to match the ground truth at each sequence step. Although the \acronym module is always sampling from the model, it doesn't suffer like the Naive model and quickly catches the baselines and continues to further improve.
}
\label{fig:cer}
\end{figure} }
{\bf Impact of edit distance.} We further investigate the role of the optimizer by experimenting with different losses. Table~\ref{tab:comparison} compares the test CER and WER of the schedule sampling with a fixed probability schedule of $(1.0 \rightarrow 1.0)$ and \acronym model. Both of the models are trained only on sampled trajectories. The main difference is their optimizers, where the SS$(1.0\rightarrow1.0)$ model is optimizing the log likelihood of ground truth (\aka~Hamming distance). The significant drop in CER of SS$(1.0\rightarrow1.0)$ emphasizes the necessity of pretraining or joint training with MLE for models such as SS. \acronym is trained from random initialization and does not require MLE pretraining, nor does it require joint optimization with MLE. We also emphasize that unlike SS, we do not need to tune an exploration schedule, OCD prefixes are simply always sampled from the model from the start of training. We note that even fine tuning a pre-trained SS model which achieves $3.6\%$ CER with $100\%$ sampling increases the CER to $3.8$\%. This emphasizes the importance of making the loss a function of the model input prefixes, as opposed to the ground truth prefixes. Appendix~\ref{sec:hamm} covers another aspect of optimizing Edit distance rather than Hamming distance.

{\bf Target distribution.} Another baseline which is closer to MLE framework is selecting only one correct target. Table~\ref{tab:comparison} compares OCD with several Optimal Completion Target (OCT) models. In OCT, we optimize the log-likelihood of one target, which at each step we pick dynamically based on the minimum edit distance completion similar to OCD. We experiment with several different strategies when there is more than one character that can lead to minimum CER. In the OCT (Shortest), we select the token that would minimize the CER and the final length of the sequence. In the OCT (Same \#Words), we select the token that in addition to minimum CER, would lead to the closest number of words to the target sequence. We show that \acronym achieves significantly better CER and WER over the other optimization strategies compared in Table~\ref{tab:comparison}. This highlights the importance of optimizing for the entire set of optimal completion targets, as opposed to a single target.
\comment{
Table \ref{tab:results} summarizes prior work and our experimental results. First, we note that our baseline seq2seq model trained with MLE achieves a $3.6$\% CER and $10.6$\% WER, considerably better than most published prior work. We attribute this to very effective hyperparameter tuning. We also report a SS model of $3.6$\% CER and $10.2$\% WER.

\new{The SS baseline is trained with a schedule that starts off with MLE training, then transitions to conditioning samples generated by the model. The schedule linearly increases the sampling probability from $0$ to $100$\% over $1$M training steps. The best validation WER is achieved at $600$k steps where the sampling probability is just under $70$\%. We find that SS is unable to improve over our MLE baseline in terms of CER.}
The scheduled sampling baseline feeds the ground truth with zero probability of sampling from the model at the start of the training. Then linearly increases the probability of sampling from the model to reach $100\%$ sampling at $1000$K training iterations. However, the best validation WER is achieved at $600$K when the sampling probability hasn't reached $70\%$ yet. We find that SS does not improve over our MLE baseline in terms of CER.
}

{\bf State-of-the-art.} Our model trained with \acronym optimizes for CER; we achieve 3.1\% CER and 9.3\% WER, substantially outperforming our baseline by 14\% relatively on CER and 12\% relatively on WER. In terms of CER, our work substantially outperforms prior work as compared in Table~\ref{tab:results}, with the closest being \cite{tjandra-icassp-2018} trained with policy gradients on CER. In terms of WER, our work is also outperforming \cite{chan-iclr-2017}, which uses subword units while our model emits characters. \comment{We note the much larger gap and improvement in CER over WER, this is due to the fact that in our optimization, we optimized for CER rather than WER. However, it is still encouraging to see that improving CER correlates to improvement in WER.}
\comment{
\begin{figure}
\centering
\includegraphics[width=.8\textwidth]{beamskew.png}
\caption{The effect of beam search size on WER on WSJ.}
\label{fig:beamsearch}
\end{figure}}
\subsection{Librispeech} 
\begin{figure}[t]
\centering
\includegraphics[width=\textwidth]{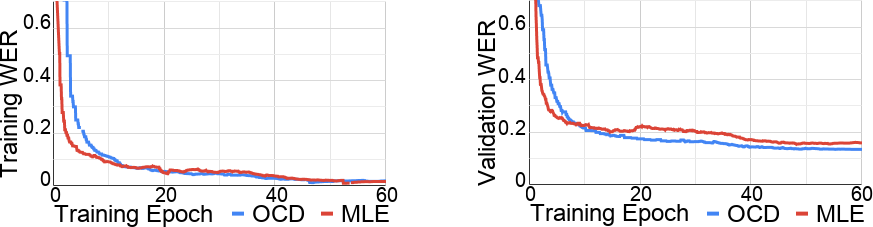}
\caption{\rebut{Librispeech training and validation WER per training epoch for OCD and MLE.}}
\label{fig:both}
\end{figure}
\begin{table}[b]
\centering 
\caption{Character Error Rate (CER) and Word Error Rate (WER) on LibriSpeech test sets.}
\label{tab:lib}
\begin{tabular}{@{}lcccc@{}}
\toprule
\multirow{2}{*}{\textbf {Model}} & \multicolumn{2}{c}{\textbf{test-clean}} & \multicolumn{2}{c}{\textbf{test-other}} \\
 & \textbf{CER} & \textbf{WER} & \textbf{CER} & \textbf{WER} \\ \midrule
 Prior Work \\
 \qquad Wav2letter \citep{collobert2016wav2letter} & 6.9 & 7.2 & - & - \\
 \qquad Gated ConvNet \citep{liptchinsky-arxiv-2017} & - &  6.7 & - & 20.8 \\
 \qquad Cold Fusion \citep{sriram2017cold} & 3.9 & 7.5 & 9.3 & 17.0 \\
 \qquad Invariant Representation Learning \citep{liang2018learning} & 3.3 & - & 11.0 & - \\
 \qquad Pretraining+seq2seq+CTC  \citep{zeyer2018improved} & - &  4.9 & - & 15.4\\
 \midrule
 Our Implementation \\
 \qquad seq2seq + MLE & 2.9 & 5.7 & 8.4 & 15.4\\
 \qquad {\bf seq2seq + OCD} & {\bf 1.7} & {\bf \libclean} & \bf{6.4} & \bf{\libother}\\
 \bottomrule
\end{tabular}
\end{table}
For the Librispeech dataset, we train on the full training set (960h audio data) and validate our results on the dev-other set. We report the results both on the ``clean'' and ``other'' test set. We use Byte Pair Encoding (BPE) \citep{sennrich-acl-2016} for the output token segmentation. BPE token set is an open vocabulary set since it includes the characters as well as common words and n-grams. We use 10k BPE tokens and report both CER and WER as the evaluation metric. We describe the architecture and hyperparameter details in Appendix \ref{sec:architecture}.

\rebut{Fig.~\ref{fig:both} shows the validation and training WER curves for MLE and OCD. OCD starts outperforming MLE on training decodings after training for 13 epochs and on validation decodings after 9 epochs.} Our MLE baseline achieves $5.7\%$ WER, while OCD achieves $\libclean\%$ WER on test-clean ($21\%$ improvement) and improves the state-of-the-art results over \cite{zeyer2018improved}.   test-other is the more challenging test split ranked by the WER of a model trained on WSJ \citep{librispeech} mainly because readers accents deviate more from US-English accents. On test-other our MLE baseline achieves 15.4\%, while our OCD model achieves \libother\% WER, outperforming the 15.4\% WER of \cite{zeyer2018improved}. Table~\ref{tab:lib} compares our results with other recent works and the MLE baseline on Librispeech.

\section{Conclusion}
\label{sec:conclusion}
This paper presents Optimal Completion Distillation (\acronym), a training procedure for optimizing autoregressive sequence models
base on edit distance. OCD is applicable to on-policy or off-policy trajectories, and in this paper,
we demonstrate its effectiveness on samples drawn from the model in an online fashion.
Given any prefix, \acronym creates an optimal extension policy by computing the exact optimal Q-values via dynamic programming.
The optimal extension policy is distilled by minimizing a KL divergence between the optimal policy and the model.
\acronym does not require MLE initialization or joint optimization with conditional log-likelihood.
\comment{The ideas presented here can be generalized to other evaluation metrics such as BLEU score, when bootstrapping is used to estimate
Q-values off-line. }\acronym achieves $3.1$\% CER and $9.3$\% WER on the competitive WSJ speech recognition task, and $\libclean\%$ WER on Librispeech
without any language model.
\acronym outperforms all published work on end-to-end speech recognition, including our own well-tuned MLE and scheduled sampling baselines
without introducing new hyper-parameters.


\ifarxiv 
 \section*{Acknowledgements}
We thank Geoffrey Hinton for his valuable feedback and reviews. We thank Samy Bengio, Navdeep Jaitly,
and Jamie Kiros for their help in reviewing the manuscript as well. We thank Zhifeng Chen and
Yonghui Wu for their generous technical help.
 \else
 \ificlrfinal
 
\fi
\fi
\bibliographystyle{plainnat}
\bibliography{bib}

\begin{appendices}

\renewcommand\thefigure{\thesection.\arabic{figure}}
\renewcommand\thetable{\thesection.\arabic{table}}

\section{OCD Algorithm}
\label{sec:algorithm}
\begin{algorithm}[h!]
\caption[Caption of Alg]{EditDistanceQ op returns {\bf Q-values} of the tokens at each time step based on the minimum edit distance between a reference sequence $r$ and a hypothesis sequence $h$ of length $t$.
}\label{alg:edit}
\begin{algorithmic}[1]
\For{$j$ in $(0..t)$}
\State $d_j \gets j + 1$
\EndFor
\For{$i$ in $(1..t)$}
\State minDist $\gets i$
\State subCost $\gets i - 1$
\State insCost $\gets i + 1$
\For{$j$ in $(0..t - 1)$}
\If {$h_{i - 1}$ = $r_j$}
\State repCost $\gets 0$
\Else
\State repCost $\gets 1$
\EndIf
\State cheapest $\gets min($subCost $+$ repCost, $d_{j} + 1$, insCost)
\State subCost $\gets d_j$
\State insCost $\gets$ cheapest + 1
\State $d_j \gets $ cheapest
\If {$d_j < $ minDist}
\State minDist  $\gets d_j$
\EndIf
\EndFor
\If{minDist $= i$}
\State $Q_{i, r_1} \gets 1$
\EndIf
\For{$j$ in $(1..t)$}
\If{$d_j = $ minDist}
\State $Q_{i,r_{j+1}} \gets 1$
\EndIf
\EndFor
\For{all tokens k}
\State $Q_{i,k} \gets Q_{i,k} - 1 - $ minDist
\EndFor
\EndFor
\Return $Q$
\end{algorithmic}
\end{algorithm}
{\bf Complexity.} The total time complexity for calculating the sequence loss using \acronym is $O(T^2 + |V|T)$ where $V$ is the vocabulary size and $T$ is the sequence length. MLE loss has a time complexity of $O(|V|T)$ for calculating the softmax loss at each step. Therefore, assuming that $O(T) \leq O(|V|)$ \acronym does not change the time complexity compared to the baseline seq2seq+MLE. The memory cost of the OCD algorithm is $O(T + |V|T) = O(|V|T)$, $O(T)$ for the dynamic programming in line 4 - line 13 of Proc.~\ref{alg:edit} and $O(|V|T)$ for storing the stepwise $Q$ values. MLE also stores the one-hot encoding of targets at each step with a cost of $O(|V|T)$. Therefore, the memory complexity does not change  compared to the MLE baseline either.

Although the loss calculation has the same complexity as MLE, online sampling from the model to generate the input of next RNN cell (as in \acronym and SS) is generally slower than reading the ground truth (as in MLE). Therefore, overall a naive implementation of \acronym is $\leq 20\%$ slower than our baseline MLE in terms of number of step time.  However, since \acronym is stand alone and can be trained off-policy, we can also train on stale samples and untie the input generation worker from the training workers. In this case it is as fast as the MLE baseline.

\setcounter{figure}{0} 
\setcounter{table}{0} 

{\bf Run through.} As an example of how this algorithm works, consider the sequence ``SUNDAY'' as reference and ``SATURDAY'' as hypothesis. Table~\ref{tab:algorithm} first shows how to extract optimal targets and their respective $Q^*$-values from the table of edit distances between all prefixes of reference and all prefixes of hypothesis. At each row highlighted cells indicate the prefixes which has minimum edit distance in the row. The next character at these indices are the Optimal targets for that row. At each step the $Q^*$-value for the optimal targets is negative of the minimum edit distance and for the non-optimal characters it is one smaller.

Table~\ref{tab:algorithm} also illustrates how appending the optimal completions for the prefix ``SA'' of the hypothesis can lead to the minimum total edit distance. Concatenating with both reference suffixes, ``UNDAY'' and ``NDAY'' will result in an edit distance of $1$. Therefore, predicting ``U'' or ``N'' at step 2 can lead to the maximum attainable reward of $(-1)$. 

\begin{table}[h]
\caption{Top: Each row corresponds to a prefix of ``SATURDAY'' and shows
edit distances with all prefixes of ``SUNDAY'', along with the optimal targets and their $Q^*$-value at that step. The highlighted cells indicate cells with minimum edit distance at each row. Bottom: An example of appending suffixes of ``SUNDAY'' with minimum edit distance to the prefix ``SA''.}
\label{tab:algorithm}
\centering
\begin{tabular}{@{}|>{\columncolor[HTML]{C0C0C0}}l|l|l|l|l|l|l|l|l|c|@{}}
\hline
 & \multicolumn{7}{c|}{Edit Distance} & OCD Targets & Q-values\\ \hline
\rowcolor[HTML]{C0C0C0}
 &  & S & U & N & D & A & Y & &  \\ \hline
 & \cellcolor[HTML]{FE996B}0 & 1 & 2 & 3 & 4 & 5 & 6 & S & 0 \\ \hline
S & 1 & \cellcolor[HTML]{FE996B}0 & 1 & 2 & 3 & 4 & 5 & U & 0 \\ \hline
A & 2 & \cellcolor[HTML]{FE996B}1 & \cellcolor[HTML]{FE996B}1 & 2 & 3 & 3 & 4 & U, N & -1 \\ \hline
T & 3 & \cellcolor[HTML]{FE996B}2 & \cellcolor[HTML]{FE996B}2 & \cellcolor[HTML]{FE996B}2 & 3 & 4 & 4 & U, N, D & -2 \\ \hline
U & 4 & 3 & \cellcolor[HTML]{FE996B}2 & 3 & 3 & 4 & 5 & N & -2 \\ \hline
R & 5 & 4 & \cellcolor[HTML]{FE996B}3 & \cellcolor[HTML]{FE996B}3 & 4 & 4 & 5 & N, D & -3 \\ \hline
D & 6 & 5 & 4 & 4 & \cellcolor[HTML]{FE996B}3 & 4 & 5 & A & -3 \\ \hline
A & 7 & 6 & 5 & 5 & 4 & \cellcolor[HTML]{FE996B}3 & 4 & Y & -3 \\ \hline
Y & 8 & 7 & 6 & 6 & 5 & 4 & \cellcolor[HTML]{FE996B}3 & \textless{}/s\textgreater~& -3 \\ \hline
\end{tabular}\\ \vspace{.3cm}
\begin{tabular}{|
>{\columncolor[HTML]{32CB00}}l |l|l|l|l|l|l|l|}
\hline
\cellcolor[HTML]{C0C0C0} & \cellcolor[HTML]{C0C0C0} & \cellcolor[HTML]{C0C0C0}S & \cellcolor[HTML]{32CB00}U & \cellcolor[HTML]{32CB00}N & \cellcolor[HTML]{32CB00}D & \cellcolor[HTML]{32CB00}A & \cellcolor[HTML]{32CB00}Y \\ \hline
\cellcolor[HTML]{C0C0C0} & \cellcolor[HTML]{FE996B}0 & 1 & 2 & 3 & 4 & 5 & 6 \\ \hline
\cellcolor[HTML]{C0C0C0}S & 1 & \cellcolor[HTML]{FE996B}0 & 1 & 2 & 3 & 4 & 5 \\ \hline
\cellcolor[HTML]{C0C0C0}A & 2 & \cellcolor[HTML]{32CB00}1 & \cellcolor[HTML]{FE996B}1 & 2 & 3 & 3 & 4 \\ \hline
U &  &  & \cellcolor[HTML]{32CB00}1 &  &  &  &  \\ \hline
N &  &  &  & \cellcolor[HTML]{32CB00}1 &  &  &  \\ \hline
D &  &  &  &  & \cellcolor[HTML]{32CB00}1 &  &  \\ \hline
A &  &  &  &  &  & \cellcolor[HTML]{32CB00}1 &  \\ \hline
Y &  &  &  &  &  &  & \cellcolor[HTML]{32CB00}1 \\ \hline
\end{tabular}~~
\begin{tabular}{|l|l|l|l|l|l|l|l|}
\hline
\rowcolor[HTML]{32CB00} 
\cellcolor[HTML]{C0C0C0} & \cellcolor[HTML]{C0C0C0} & \cellcolor[HTML]{C0C0C0}S &  \cellcolor[HTML]{C0C0C0} U & N & D & A & Y \\ \hline
\cellcolor[HTML]{C0C0C0} & \cellcolor[HTML]{FE996B}0 & 1 & 2 & 3 & 4 & 5 & 6 \\ \hline
\cellcolor[HTML]{C0C0C0}S & 1 & \cellcolor[HTML]{FE996B}0 & 1 & 2 & 3 & 4 & 5 \\ \hline
\cellcolor[HTML]{C0C0C0}A & 2 & \cellcolor[HTML]{FE996B}1 & \cellcolor[HTML]{32CB00}1 & 2 & 3 & 3 & 4 \\ \hline
\cellcolor[HTML]{32CB00}N &  &  &  & \cellcolor[HTML]{32CB00}1 &  &  &  \\ \hline
\cellcolor[HTML]{32CB00}D &  &  &  &  & \cellcolor[HTML]{32CB00}1 &  &  \\ \hline
\cellcolor[HTML]{32CB00}A &  &  &  &  &  & \cellcolor[HTML]{32CB00}1 &  \\ \hline
\cellcolor[HTML]{32CB00}Y &  &  &  &  &  &  & \cellcolor[HTML]{32CB00}1 \\ \hline
\multicolumn{7}{c}{}\\
\end{tabular}
\end{table}

\section{Exposure bias}
\label{sec:exposure}
A key limitation of teacher forcing for sequence learning stems from the discrepancy between the training
and test objectives. One trains the model using conditional log-likelihood $\mathcal{O}_{\text{CLL}}$, but evaluates the quality of the model using
empirical reward $\mathcal{O}_{\text{ER}}$.

\begin{figure}[t]
\vspace*{-.2cm}
\begin{center}
\begin{tabular}{c@{\hspace*{2cm}}c}
      (a) {Teacher Forcing (MLE)} &
      (b) {Scheduled Sampling} \\[-.1cm]
      \includegraphics[width=0.35\textwidth]{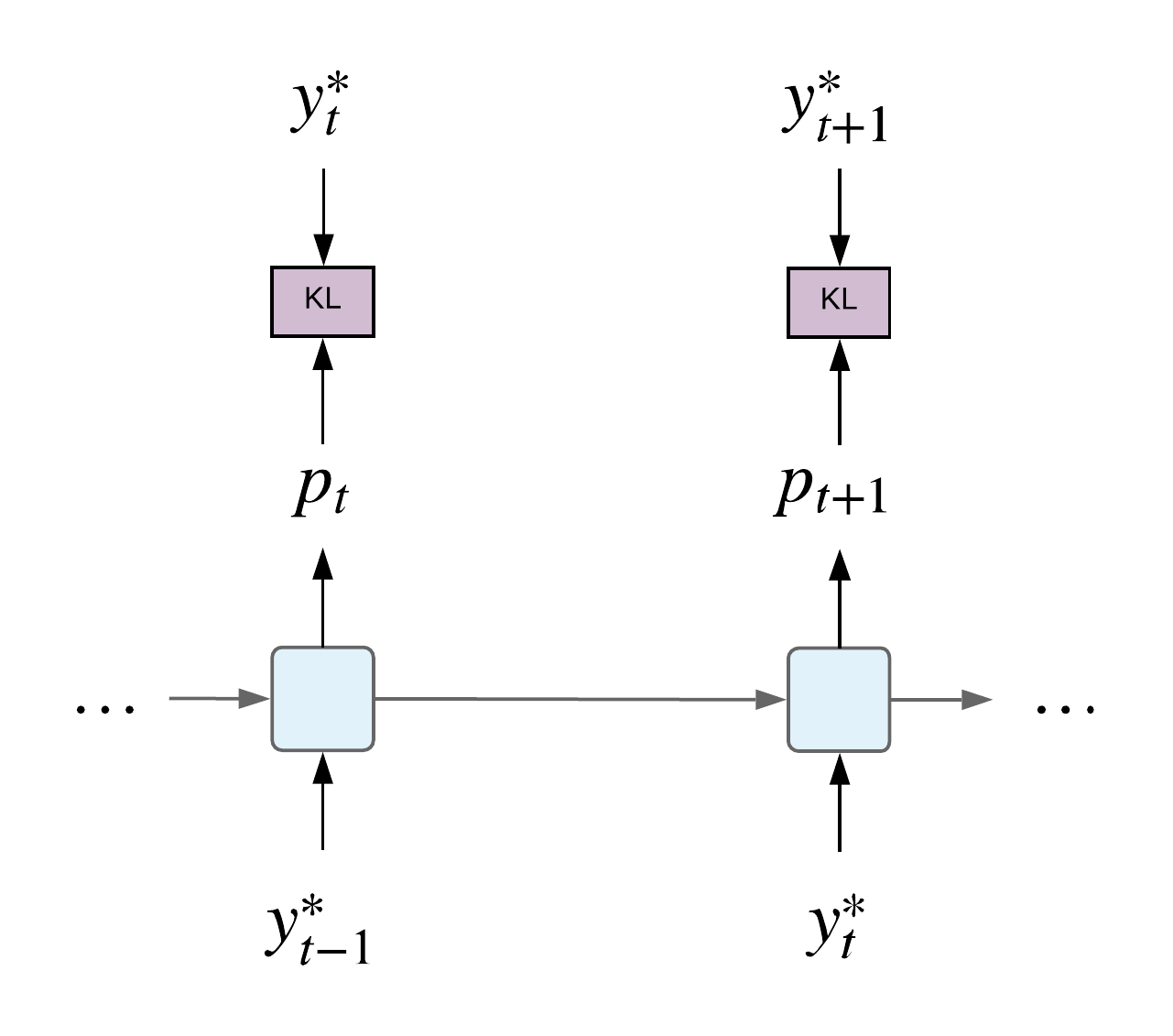} &
      \includegraphics[width=0.35\textwidth]{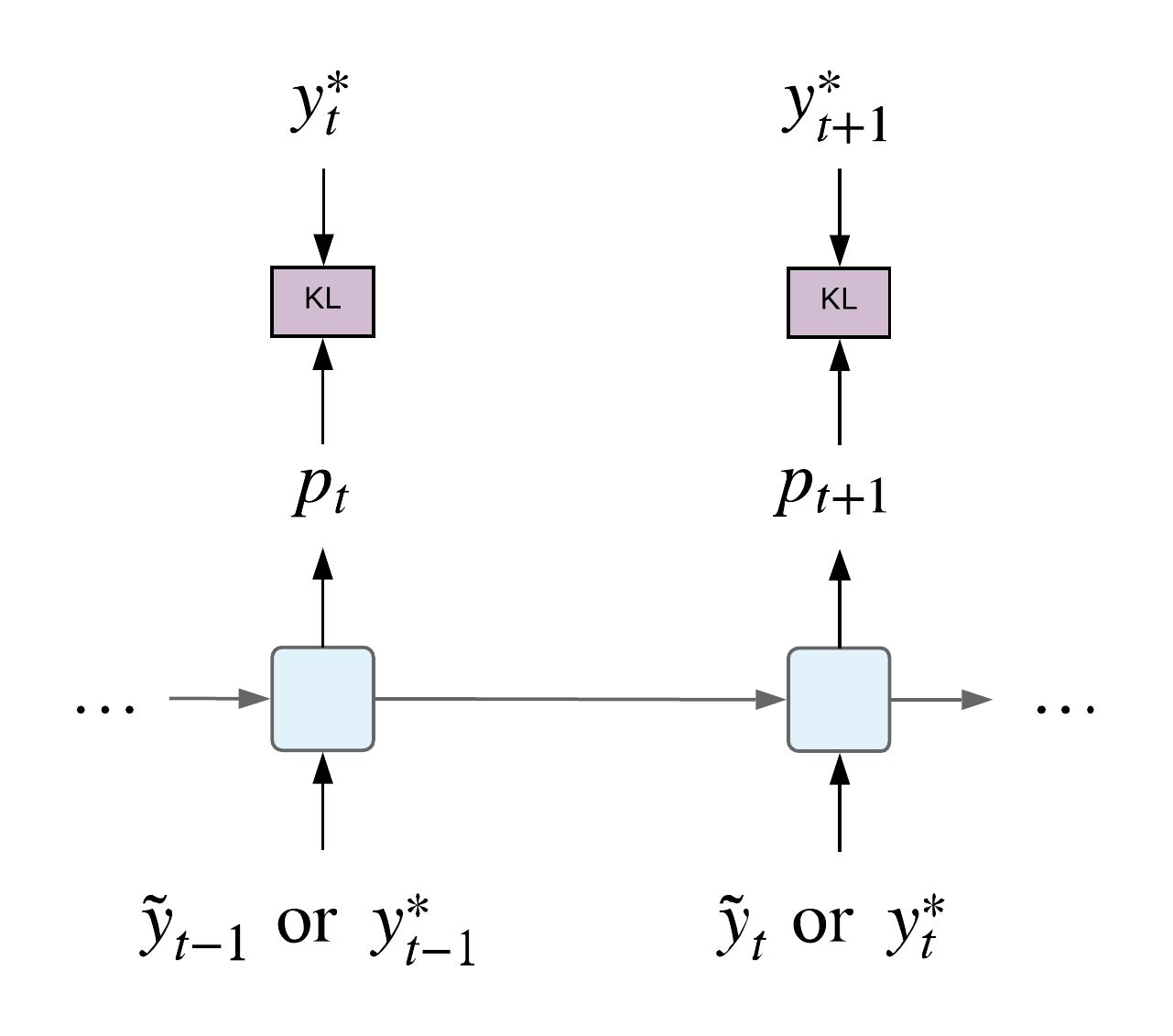} \\[-.1cm]
      \includegraphics[width=0.35\textwidth]{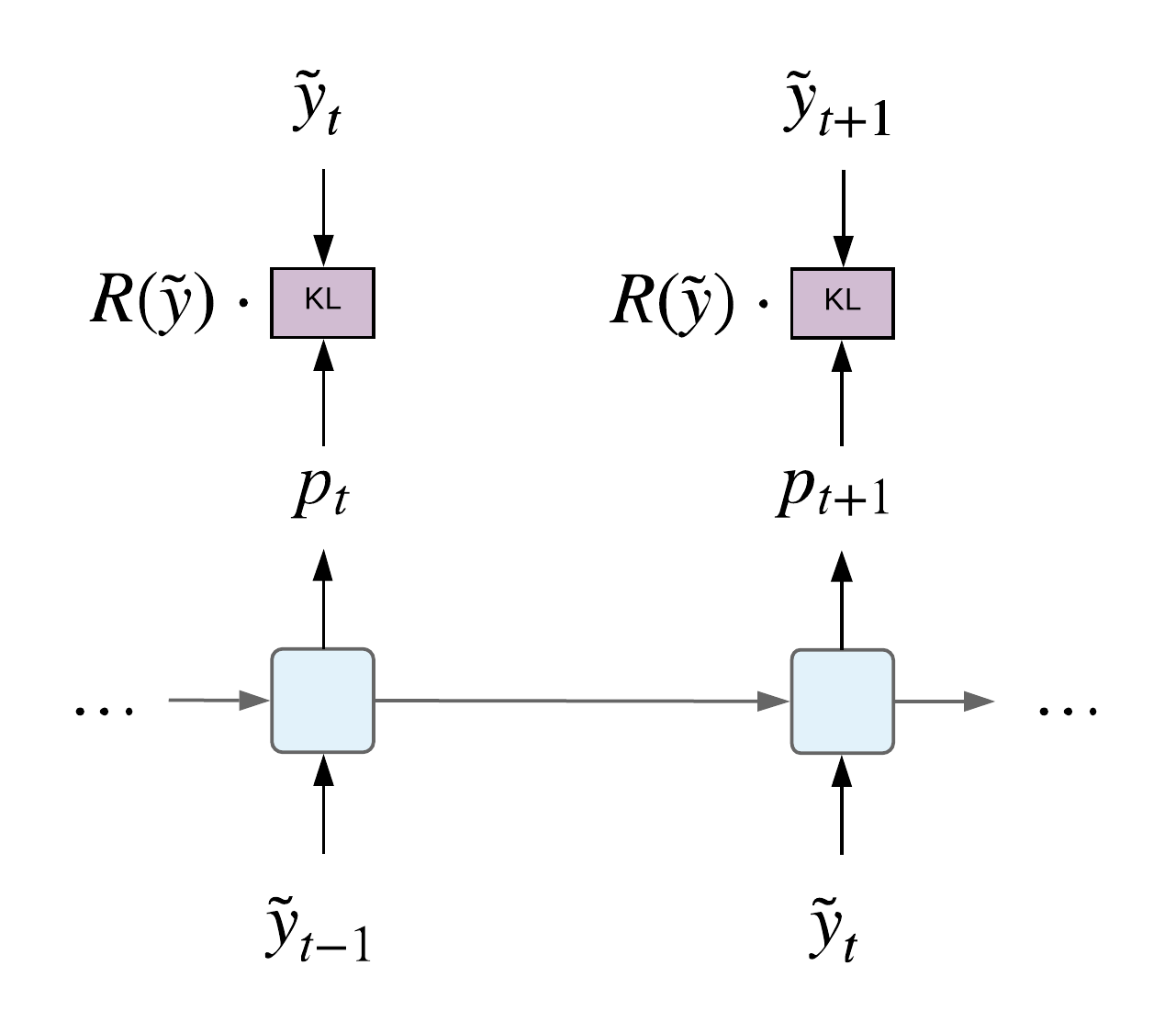} &
      \includegraphics[width=0.35\textwidth]{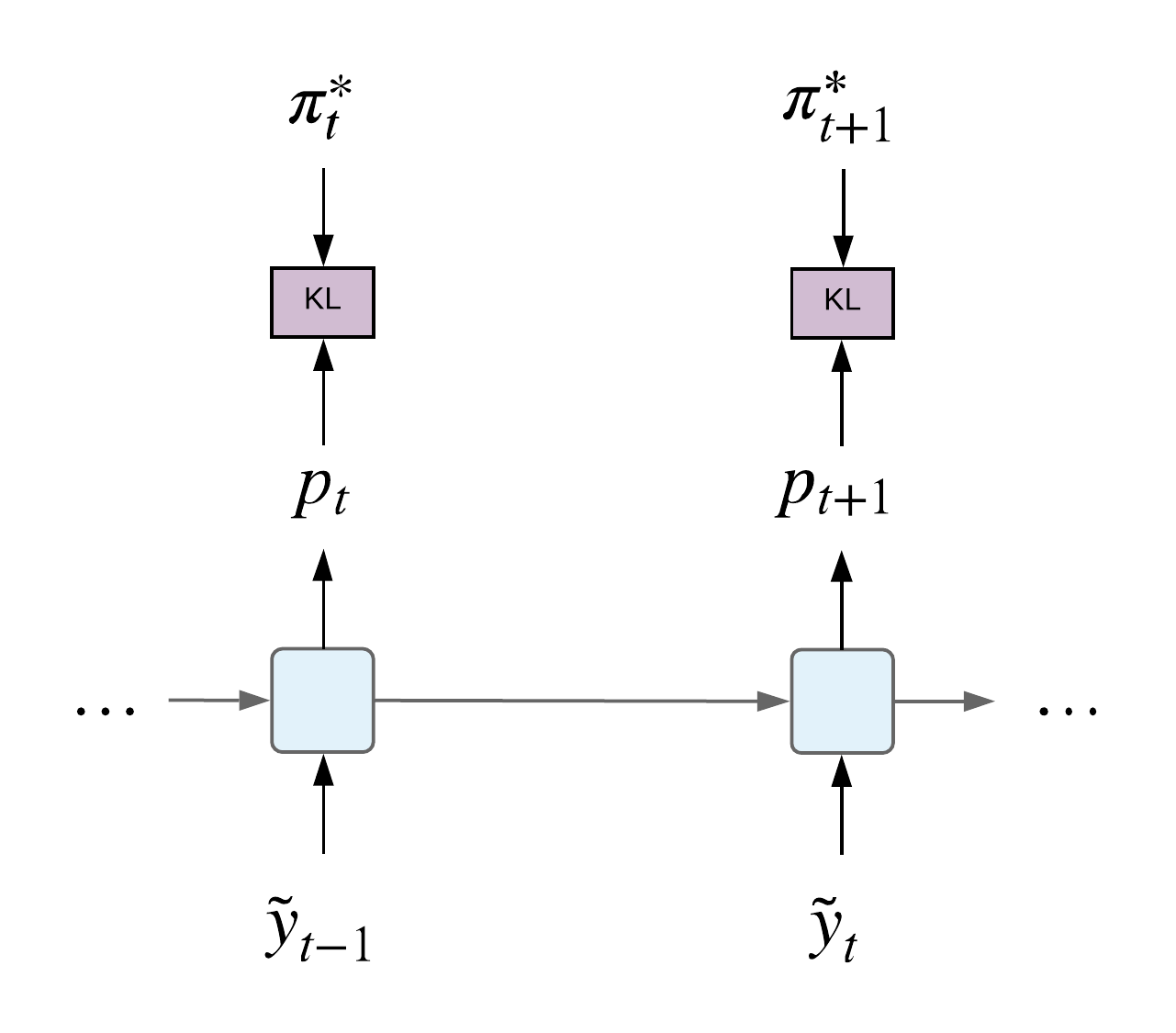} \\[-.1cm]
      (c) {Policy Gradient} &
      (d) {Optimal Completion Distillation}\\
\vspace*{-.2cm}
\end{tabular}
\end{center}
\comment{
                \end{tabular}
                \end{figure}
                \begin{figure}[t]
                  \centering
                  \begin{subfigure}[t]{0.5\textwidth}
                      \centering
                      \includegraphics[width=0.75\textwidth]{rnn-tf}
                      \caption{Teacher Forcing}
                      \label{subfig:teacher-forcing}
                  \end{subfigure}%
                  \begin{subfigure}[t]{0.5\textwidth}
                      \centering
                      \includegraphics[width=0.75\textwidth]{rnn-ss}
                      \caption{Scheduled Sampling}
                      \label{subfig:ss}
                  \end{subfigure}
                  \begin{subfigure}[t]{0.5\textwidth}
                      \centering
                      \includegraphics[width=0.75\textwidth]{rnn-policy}
                      \caption{Policy Gradient}
                      \label{subfig:policy}
                  \end{subfigure}%
                  \begin{subfigure}[t]{0.5\textwidth}
                      \centering
                      \includegraphics[width=0.75\textwidth]{rnn-ocd}
                      \caption{Optimal Completion Distillation}
                      \label{subfig:ocd}
                  \end{subfigure}
                  \subref{subfig:teacher-forcing}
}
\caption{Illustration of different training strategies for autoregressive sequence models.
(a) Teacher Forcing: the model conditions on correct prefixes and is taught to predict the next ground truth token.
(b) Scheduled Sampling: the model conditions on tokens either from ground truth or drawn from the model and is taught to predict the next ground truth token regardless.
(c) Policy Gradient: the model conditions on prefixes drawn from the model and is encouraged to reinforce sequences with a large sequence reward $R(\tilde y)$.
(d) Optimal Completion Distillation: the model conditions on prefixes drawn from the model and is taught to predict an optimal completion policy $\pi^*$ specific to the prefix.}
\label{fig:trainingstrats}
\end{figure}

Unlike teacher forcing and Scheduled Sampling (SS), policy gradient approaches (\eg~\cite{ranzato-iclr-2016,bahdanau-iclr-2017})
and OCD aim to optimize the empirical reward objective \eqref{eq:reward} on the training set. We illustrate four different training strategies
of MLE, SS, Policy Gradient and OCD in \figref{fig:trainingstrats}. The drawback of policy gradient techniques is twofold: 1) they cannot
easily incorporate ground truth sequence information except through the reward function, and 2) they have difficulty reducing the variance of the gradients to
perform proper credit assignment. Accordingly, most policy gradient approaches~\cite{ranzato-iclr-2016,bahdanau-iclr-2017,wu2016google} pre-train the model
using teacher forcing. By contrast, the OCD method proposed in this paper defines an optimal completion policy $\pi^*_t$
for any {\em off-policy} prefix by incorporating the ground truth information. Then, OCD optimizes a token level log-loss
and alleviates the credit assignment problem. Finally, training is much more stable, and we do not require initialization nor joint optimization with MLE.

There is an intuitive notion of {\em exposure bias}~\cite{ranzato-iclr-2016} discussed in the literature as a limitation of teacher forcing.
We formalize this notion as follows. One can think of the optimization of the log loss~\eqref{eq:cll} in an autoregressive models as a classification problem, where the input to the classifier is a tuple $(\bs, \bys_{<t})$ and the correct output is $\a^*_i$, where $\bas_{<t} \equiv (y^*_1, \ldots, y^*_{t-1})$. Then the training dataset comprises different examples and different prefixes of the ground truth sequence. The key challenge is that once the model is trained, one should not expect the model to generalize to a new prefix $\by_{<t}$ that does not come from the training distribution of $P(\bys_{<t})$. This problem can
become severe as $\by_{<t}$ becomes more dissimilar to correct prefixes. During inference, when one conducts beam search with a large beam size then
one is more likely to discover wrong generalization of $\pit(\hat{y}_{t} | \bai_{<t}, \bx)$, because the sequence is optimized globally.
A natural strategy to remedy this issue is to train on arbitrary prefixes. Unlike the aforementioned techniques OCD can train on any prefix
given its {\em off-policy} nature.

Figure~\ref{fig:beam} illustrates how increasing the beam size for MLE and SS during inference decreases their performance on WSJ datasets to above $11\%$ WER. \acronym suffers a degradation in the performance too but it never gets above $10\%$ WER.
\begin{figure}[H]
\centering
  \includegraphics[width=.57\linewidth]{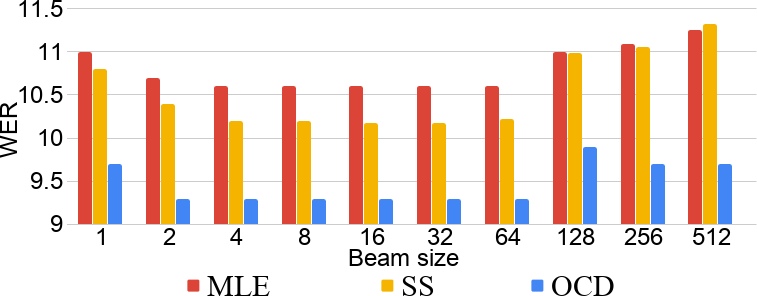}
\caption{Word Error Rate (WER) of WSJ with MLE, SS and OCD for different beam sizes.
}
\label{fig:beam}
\end{figure}

\section{Architecture}
\label{sec:architecture}
{\bf WSJ.} The input audio signal is converted into $80$-dimensional filterbank features computed every $10$ms with delta and delta-delta acceleration, normalized with per-speaker mean and variance generated by Kaldi \citep{povey2011kaldi}. Our encoder uses 2-layers of convolutions with $3 \times 3$ filters, stride $2 \times 2$ and $32$ channels, followed by a convolutional LSTM with 1D-convolution of filter width $3$, followed by 3 LSTM layers with $256$ cell size. We also apply batch-normalization between each layer in the encoder. The attention-based decoder is a 1-layer LSTM with $256$ cell size with content-based attention. We \rebut{use Xavier initializer~\citep{glorot2010understanding} and }train our models for 300 epochs of batch size 8 with 8 async workers. We separately tune the learning rate for our baseline and OCD model, $0.0007$ for OCD vs $0.001$ for baseline. We apply a single $0.01$ drop of learning rate when validation CER plateaus, the same as for our baseline. Both happen around 225 epoch. We implemented our experiments\footnote{We are in the process of releasing the code for \acronym.} in TensorFlow \citep{abadi2016tensorflow}.

{\bf Librispeech.} Since the dataset is larger than WSJ, we use a larger batch size of 16, smaller learning rate of $0.0005$ for baseline and $0.0003$ for \acronym. Models are trained for 70 epochs. We remove the convolutional LSTM layers of the encoder, increase the number of LSTM layers in the encoder to 6, and increase the LSTM cell size to 384. All other configs are the same as the WSJ setup.

\section{Hamming distance VS Edit Distance during training}
\label{sec:hamm}
Figure~\ref{fig:hamedit} plots the edit distance on training data of \acronym and MLE for fixed hamming distances during training. The plot shows that for a fixed Hamming distance (which is the metric that MLE correlates with more), \acronym achieves a lower edit distance compared to MLE. This gives evidence that \acronym is indeed optimizing for edit distance as intended.

\begin{figure}[h]
\centering
  \includegraphics[width=.5\linewidth]{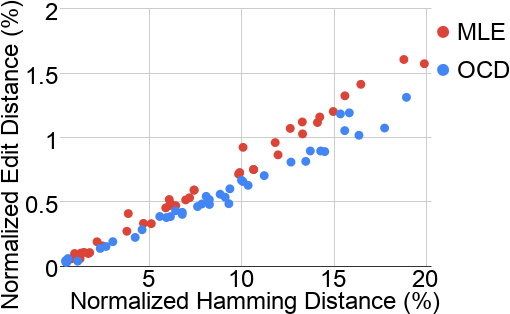}
\caption{WSJ training Character Error Rate (CER) of MLE and OCD over Character Accuracy at different checkpoints during training. 
}
\label{fig:hamedit}
\end{figure}

\comment{
\begin{table}[]
\comment{\begin{tabular}{|l|l|l|l|l|l|l|l|}
\hline
\rowcolor[HTML]{C0C0C0} 
\cellcolor[HTML]{FFFFFF}{\color[HTML]{000000} } &  & S & U & N & D & A & Y \\ \hline
\cellcolor[HTML]{C0C0C0} & 0 & 1 & 2 & 3 & 4 & 5 & 6 \\ \hline
\cellcolor[HTML]{C0C0C0}S & 1 & 0 & 1 & 2 & 3 & 4 & 5 \\ \hline
\cellcolor[HTML]{C0C0C0}A & 2 & 1 & 1 & 2 & 3 & 3 & 4 \\ \hline
\cellcolor[HTML]{C0C0C0}T & 3 & \cellcolor[HTML]{FE996B}2 & \cellcolor[HTML]{FE996B}2 & \cellcolor[HTML]{FE996B}2 & 3 & 4 & 4 \\ \hline
? &  &  &  &  &  &  &  \\ \hline
? &  &  &  &  &  &  &  \\ \hline
? &  &  &  &  &  &  &  \\ \hline
? &  &  &  &  &  &  &  \\ \hline
? &  &  &  &  &  &  &  \\ \hline
\end{tabular}~}\begin{tabular}{|
>{\columncolor[HTML]{32CB00}}l |l|l|l|l|l|l|l|}
\hline
\cellcolor[HTML]{FFFFFF}{\color[HTML]{000000} } & \cellcolor[HTML]{C0C0C0} & \cellcolor[HTML]{C0C0C0}S & \cellcolor[HTML]{32CB00}U & \cellcolor[HTML]{32CB00}N & \cellcolor[HTML]{32CB00}D & \cellcolor[HTML]{32CB00}A & \cellcolor[HTML]{32CB00}Y \\ \hline
\cellcolor[HTML]{C0C0C0} & 0 & 1 & 2 & 3 & 4 & 5 & 6 \\ \hline
\cellcolor[HTML]{C0C0C0}S & 1 & 0 & 1 & 2 & 3 & 4 & 5 \\ \hline
\cellcolor[HTML]{C0C0C0}A & 2 & 1 & 1 & 2 & 3 & 3 & 4 \\ \hline
\cellcolor[HTML]{C0C0C0}T & 3 & \cellcolor[HTML]{32CB00}2 & \cellcolor[HTML]{FE996B}2 & \cellcolor[HTML]{FE996B}2 & 3 & 4 & 4 \\ \hline
U &  &  & \cellcolor[HTML]{32CB00}2 &  &  &  &  \\ \hline
N &  &  &  & \cellcolor[HTML]{32CB00}2 &  &  &  \\ \hline
D &  &  &  &  & \cellcolor[HTML]{32CB00}2 &  &  \\ \hline
A &  &  &  &  &  & \cellcolor[HTML]{32CB00}2 & \cellcolor[HTML]{FFFFFF} \\ \hline
Y &  &  &  &  &  &  & \cellcolor[HTML]{32CB00}2 \\ \hline
\end{tabular}~\begin{tabular}{|l|l|l|l|l|l|l|l|}
\hline
\rowcolor[HTML]{32CB00} 
\cellcolor[HTML]{FFFFFF}{\color[HTML]{000000} } & \cellcolor[HTML]{C0C0C0} & \cellcolor[HTML]{C0C0C0}S & \cellcolor[HTML]{C0C0C0}U & N & D & A & Y \\ \hline
\cellcolor[HTML]{C0C0C0} & 0 & 1 & 2 & 3 & 4 & 5 & 6 \\ \hline
\cellcolor[HTML]{C0C0C0}S & 1 & 0 & 1 & 2 & 3 & 4 & 5 \\ \hline
\cellcolor[HTML]{C0C0C0}A & 2 & 1 & 1 & 2 & 3 & 3 & 4 \\ \hline
\cellcolor[HTML]{C0C0C0}T & 3 & \cellcolor[HTML]{FE996B}2 & \cellcolor[HTML]{32CB00}2 & \cellcolor[HTML]{FE996B}2 & 3 & 4 & 4 \\ \hline
\cellcolor[HTML]{32CB00}N &  &  &  & \cellcolor[HTML]{32CB00}2 &  &  &  \\ \hline
\cellcolor[HTML]{32CB00}D &  &  &  &  & \cellcolor[HTML]{32CB00}2 &  &  \\ \hline
\cellcolor[HTML]{32CB00}A &  &  &  &  &  & \cellcolor[HTML]{32CB00}2 & \cellcolor[HTML]{FFFFFF} \\ \hline
\cellcolor[HTML]{32CB00}Y &  &  &  &  &  &  & \cellcolor[HTML]{32CB00}2 \\\hline
\cellcolor[HTML]{C0C0C0} &  &  &  &  &  &  &  \\ \hline
\end{tabular}~\begin{tabular}{|
>{\columncolor[HTML]{C0C0C0}}l |l|l|l|l|l|l|l|}
\hline
\cellcolor[HTML]{FFFFFF}{\color[HTML]{000000} } & \cellcolor[HTML]{C0C0C0} & \cellcolor[HTML]{C0C0C0}S & \cellcolor[HTML]{C0C0C0}U & \cellcolor[HTML]{C0C0C0}N & \cellcolor[HTML]{32CB00}D & \cellcolor[HTML]{32CB00}A & \cellcolor[HTML]{32CB00}Y \\ \hline
 & 0 & 1 & 2 & 3 & 4 & 5 & 6 \\ \hline
S & 1 & 0 & 1 & 2 & 3 & 4 & 5 \\ \hline
A & 2 & 1 & 1 & 2 & 3 & 3 & 4 \\ \hline
T & 3 & \cellcolor[HTML]{FE996B}2 & \cellcolor[HTML]{FE996B}2 & \cellcolor[HTML]{32CB00}2 & 3 & 4 & 4 \\ \hline
\cellcolor[HTML]{32CB00}D &  &  &  &  & \cellcolor[HTML]{32CB00}2 &  &  \\ \hline
\cellcolor[HTML]{32CB00}A &  &  &  &  &  & \cellcolor[HTML]{32CB00}2 & \cellcolor[HTML]{FFFFFF} \\ \hline
\cellcolor[HTML]{32CB00}Y &  &  &  &  &  &  & \cellcolor[HTML]{32CB00}2 \\ \hline
 &  &  &  &  &  &  &  \\ \hline
 &  &  &  &  &  &  &  \\ \hline
\end{tabular}
\end{table}
\begin{table}[]

\end{table}}

\end{appendices}

\end{document}